\newtheorem{theorem}{Theorem}
\newtheorem{definition}{Definition}
\newtheorem{lemma}{Lemma}
\newcommand{\namemodel}{{DHGAK}{}}
\title{Deep Hierarchical Graph Alignment Kernels}
\author{
Shuhao Tang$^1$
\and
Hao Tian$^1$\and
Xiaofeng Cao$^2$\And
Wei Ye$^{1,\ast}$\\
\affiliations
$^1$Tongji University, Shanghai, China\\
$^2$Jilin University, Changchun, China\\
\emails
\{tangsh2022, yew\}@tongji.edu.cn\\
xiaofengcao@jlu.edu.cn
}
\begin{document}

\maketitle

\begingroup\renewcommand\thefootnote{$^\ast$}
\footnotetext{Corresponding author.}
\endgroup

\begin{abstract}
    
  Typical $\mathcal{R}$-convolution graph kernels invoke the kernel functions that decompose graphs into non-isomorphic substructures and compare them. However, overlooking implicit similarities and topological position information between those substructures limits their performances. In this paper, we introduce Deep Hierarchical Graph Alignment Kernels (DHGAK) to resolve this problem. Specifically, the relational substructures are hierarchically aligned to cluster distributions in their deep embedding space. The substructures belonging to the same cluster are assigned the same feature map in the Reproducing Kernel Hilbert Space (RKHS), where graph feature maps are derived by kernel mean embedding. Theoretical analysis guarantees that DHGAK is positive semi-definite and has linear separability in the RKHS. Comparison with state-of-the-art graph kernels on various benchmark datasets demonstrates the effectiveness and efficiency of DHGAK. The code is available at Github\footnote{\url{https://github.com/EWesternRa/DHGAK}}.

\end{abstract}

\section{Introduction}

Over the past decades, more and more data in studies depicting relationships between objects cannot be simply interpreted as vectors or tables. Instead, they are naturally represented by graphs. In order to capture the inherent information from graph-structured data, many graph-based machine learning algorithms are gaining attraction.

Graph kernels (GKs) \cite{Graphkernels} are conventional methods for measuring the similarity of graphs by kernel method defined on graphs. Most graph kernels are instances of $\mathcal{R}$-convolution theory \cite{R-convolution}. These methods decompose graphs into non-isomorphic substructures (e.g., paths or walks, subgraphs, subtrees) and compare them. Graph similarity is the sum of all the substructure similarities. However, the computation of the substructure similarity is strict, i.e., all non-isomorphic substructures are treated as dissimilar ones, which impedes the precise comparison of graphs. In addition, \(\mathcal{R}\)-convolution graph kernels neglect the topological position information between substructures, which plays a crucial role in the characteristic of the whole graph, e.g., whether or not a drug molecule is effective on a disease.

\begin{figure}
    \centering
    \subfloat[Aspirin]{
    \includegraphics[width=.25\columnwidth]{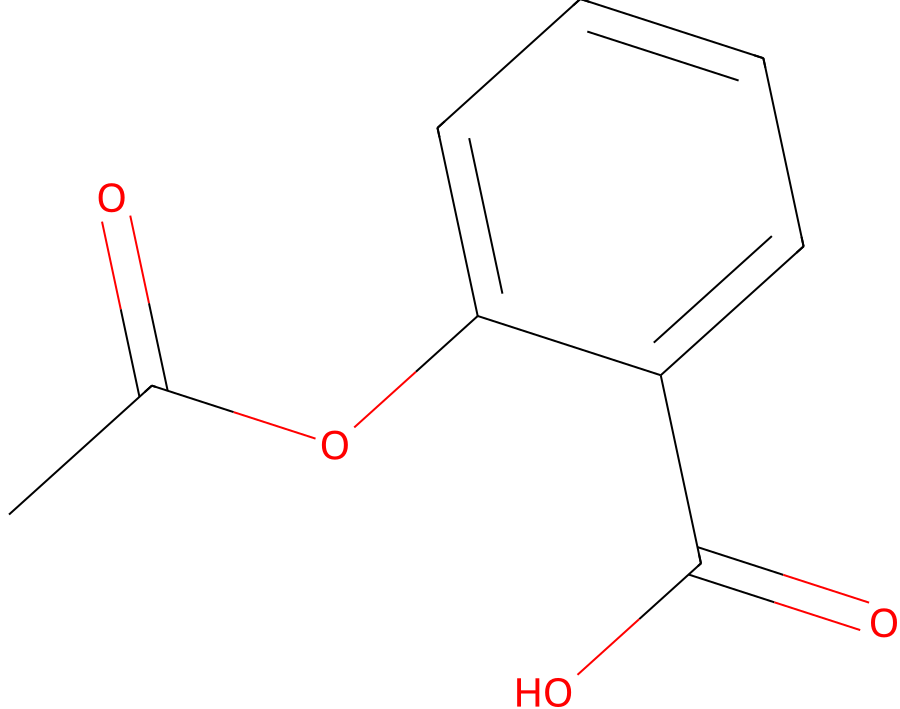}
    \label{fig:aspirin}
    }
    \subfloat[3-Acetoxybenzoic acid]{
    \includegraphics[width=.32\columnwidth]{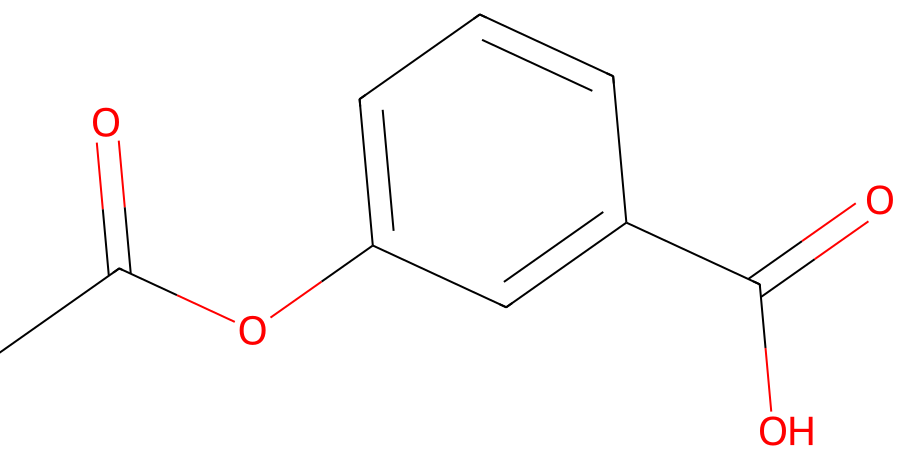}
    \label{fig:3-acetoxybenzoic acid}
    }
    \subfloat[Acetaminophen]{
    \includegraphics[width=.32\columnwidth]{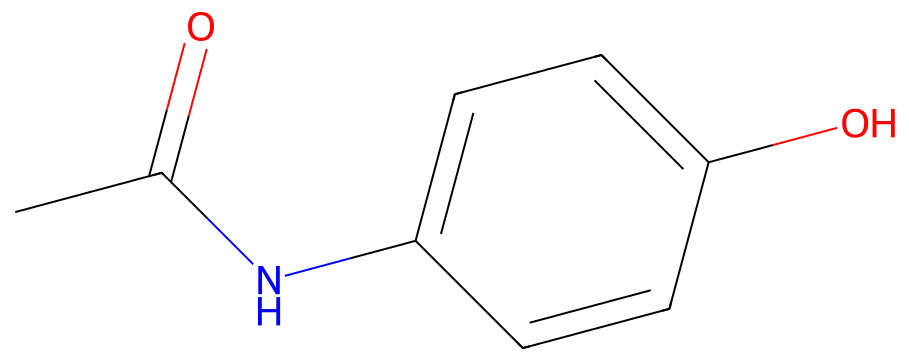}
    \label{fig:Acetaminophen}
    }
    \caption{The structure 2D depictions of Aspirin, Acetaminophen, and 3-Acetyloxybenzoic acid.}
    \label{fig:mols}
\end{figure}


We show the 2D representations of three molecules in Figure \ref{fig:mols} to explain the importance of precise substructure similarity comparison and their relative topological positions. 
Aspirin (Figure \ref{fig:mols} (a)) and 3-Acetoxybenzoic acid (Figure \ref{fig:mols} (b)) have the same benzoic and acetyloxy substructures, but the former is commonly used in drugs for the treatment of pain and fever, while the latter is not. The difference in the relative topological positions of these two substructures causes the different molecule characteristics. \(\mathcal{R}\)-convolution graph kernels are difficult to distinguish the two molecules because they are unaware of the substructure topological position information. Aspirin and Acetaminophen (Figure \ref{fig:mols} (c)) have similar substructures acetylamino and acetyloxy (only a little bit of difference in the atoms), they are both used for pain and fever. However, \(\mathcal{R}\)-convolution graph kernels cannot precisely compute the similarity between these similar substructures from them.



Many graph kernels try to alleviate these problems in different ways. 
WWL \cite{WWL} embeds graph substructures (WL \cite{WL-subtree_GK} subtree patterns) into continuous vectors and compares their distributions by leveraging the Wasserstein distance.
The use of optimal transport in WWL implicitly aligns WL subtree patterns from different graphs, thus addressing the neglect of the topological position information between WL subtree patterns. But WWL has high computational overhead and is not positive semi-definite, since the optimal transport is not guaranteed to satisfy the transitivity. In other words, if $\sigma$ is the alignment between substructures from graphs $G_1$ and $G_2$ and $\pi$ is the alignment between substructures from graphs $G_2$ and $G_3$, the optimal transport cannot guarantee that the alignment between substructures from graphs $G_1$ and $G_3$ is $\sigma\circ\pi$.
GAWL \cite{GAWL} is a positive semi-definite graph alignment kernel that explicitly aligns WL subtree patterns by comparing their labels refined by the Weisfeiler-Leman graph isomorphism test \cite{WL-test}. However, with more iterations of the WL-relabeling process, it is more difficult to align since all the WL subtree patterns tend to have different labels.


To mitigate the above problems, we propose a new graph kernel called Deep Hierarchical Graph Alignment Kernels (DHGAK). 
Specifically, we define a new graph substructure named $b$-width $h$-hop slice around a node to capture the structural information. Then, we embed each slice into a deep vector space by Natural Language Models (NLM) for precise comparison. In the deep embedding space, we propose a Deep Alignment Kernel (DAK) that aligns slices from the same or different graphs by computing the expectation of the cluster assignments of the adopted clustering methods. Slices from the same cluster are aligned while those from different clusters are unaligned. We prove that the alignment by clustering in DAK satisfies the transitivity. 
By assuming the feature maps of DAK in one graph are i.i.d. (independent and identically distributed) samples of an unknown distribution, we adopt the kernel mean embedding \cite{kernel-mean-embedding} to construct Deep Graph Alignment Kernel (DGAK) from a distributional perspective. With the summation of DGAK on each hop of $b$-width slice, we exploit hierarchical graph structural information and finally get our Deep Hierarchical Graph Alignment Kernels (DHGAK). All the three kernels, i.e., DAK, DGAK, and DHGAK, are positive semi-definite. It is theoretically demonstrated that there exists a family of clustering methods derived from which the feature map of our proposed {\namemodel} is linearly separable for any dataset in the Reproducing Kernel Hilbert Space (RKHS).

Our contributions are summarized as follows:

\begin{itemize}

    \item We propose a generalized graph alignment kernel DHGAK that can use any NLM to generate deep embeddings of substructures and any clustering methods to align substructures. DHGAK can more precisely compute graph similarity compared to conventional $\mathcal{R}$-convolution graph kernels by not only taking into account the distributions of substructures but also their topological position information in graphs.
    
    \item We prove that DHGAK is positive semi-definite and satisfies alignment transitivity. Theoretical analysis proves that the feature map of {\namemodel} produced by certain clustering methods is linearly separable in the RKHS.
    
    \item We compare DHGAK with state-of-the-art graph kernels on various benchmark datasets and the experimental results show that {\namemodel} has the best classification accuracy on most datasets.
    
\end{itemize}

\section{Related Work}

Weisfeiler-Lehman graph kernel (WL) \cite{WL-subtree_GK} quantifies graph similarity by calculating the number of common labels (subtree patterns) between two graphs at each iteration of the WL-relabeling process (Weisfeiler-Lehman graph isomorphism test \cite{WL-test}), which however diverges too fast and leads to coarse graph similarity. 
Wasserstein Weisfeiler-Lehman graph kernel (WWL) \cite{WWL} extends WL to graphs with continuous node attributes. It uses Wasserstein distance to measure the distance between graphs but the cost of computation is very expensive. 
Gradual Weisfeiler-Lehman (GWL) \cite{GWL} introduces a framework that allows a slower divergence of the WL relabeling process.
GAWL \cite{GAWL} initiates the alignment of nodes across distinct graphs based on their shared labels, determined by the WL-relabeling process. 
It becomes difficult for node alignment if the iteration number of the WL-relabeling process increases.

Shortest Path graph kernels (SP) \cite{Shortest-path-GK} quantify graph similarity by evaluating the shortest paths that have the same labels of the source and sink nodes and the same length between all node pairs within two graphs. 
However, the shortest path representation is too simple and may lose information. Tree++ \cite{Tree++} and MWSP~\cite{ye2023multi} extract the multi-scale shortest-path features in the BFS tree rooted at each node of graphs. 
RetGK \cite{RetGK} represents each graph as a set of multidimensional vectors consisting of return probability resulting from random walks starting and ending at identical nodes. The embedding in the RKHS is achieved via Maximum Mean Discrepancy (MMD) \cite{MMD}.

Beyond the graph kernels mentioned above, assignment kernels have attracted the attention of scholars. The main idea is to compute an optimal matching between the substructures of each graph pair, reflecting the graph similarity.
WL-OA~\cite{WL-OA} first designs some base kernels to generate hierarchies and then computes the optimal assignment kernels. Such kernels are guaranteed to be positive semi-definite. PM \cite{PM} 
adopts the Pyramid Match kernel \cite{lazebnik2006beyond} to find an approximate correspondence between two graph embeddings (the eigenvectors of graph adjacency matrix). HTAK \cite{bai2022hierarchical} computes the $H$-hierarchical prototype representations and hierarchically aligns the nodes of each graph to their different $h$-level ($1 \leq h \leq H$) prototypes.
However, HTAK can only work for un-attributed graphs.

Other graph kernels are as follows. 
Graph Neural Tangent Kernels (GNTKs) \cite{GNTK}, which correspond to infinite-width multilayer Graph Neural Networks (GNNs) trained by gradient descent, have the same rich expressiveness as GNNs and inherit the advantages of graph kernels. Graph Quantum Neural Tangent Kernel (GraphQNTK) \cite{GraphQNTK}, a novel quantum graph learning model, is designed to capture structural information and exploit quantum parallelism for computational efficiency. The model approximates the underlying GNTKs by introducing a multi-head quantum attention mechanism. Although both GNTK and GraphQNTK exploit the high expressiveness of GNNs, they suffer from the overparameterization and overfitting issues of GNNs on small graphs.

\begin{figure*}[!htbp]
    \centering
    \includegraphics[width=0.95\linewidth]{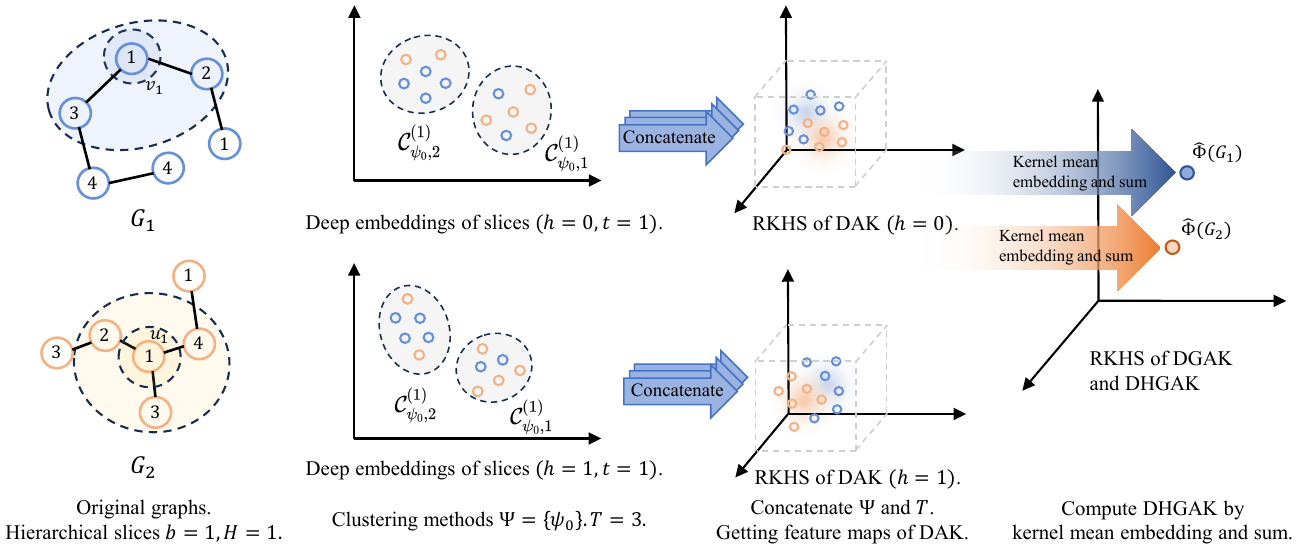}
    \caption{The framework of DHGAK is presented with slice width \(b = 1\), maximum hop \(H = 1\), clustering method set \(\Psi = \{\psi_0\}\), and experiment times \(T = 3\). For convenience, we illustrate our method from the perspective of space transformation. First, we construct the hierarchical \(b\)-width \(h\)-hop slice for each node in \(G_1\) and \(G_2\), where \(b = 1\) and \(h\) is taken from 0 to 1. Next, the encoding method mentioned in Section \ref{sec:SliceEncoding} is used to obtain the encoding sequence of slice \(S_h^b(v)\). The numbers in nodes represent the node labels in the original graphs,  we can get \(S_0^1(v_1) = [1,3,2]\), \(S_1^1(v_1) = [3, 1, 4, 2, 1, 1]\), \(S_0^1(u_1)=[1,2,4,3]\), and \(S_1^1(u_1) = [2, 1, 3, 4, 1, 1, 3, 1]\). Then, the encoding of the slice is embedded into deep embedding space via a Natural Language Model and updated by Equation \ref{eq:node_emb}. Within this deep embedding space, we cluster similar slices and concatenate the cluster indicators for all clustering methods \(\Psi\) and experiment times \(T\) to obtain the feature maps of DAK. \(\mathcal{C}_{\psi_0, i}^{(t)}\) represents the \(i\)-th cluster at the \(t\)-th experiment under clustering method \(\psi_0\in \Psi\). Finally, the feature map of DGAK is the kernel mean embeddings of the DAK feature maps. The feature map of DHGAK is computed as the sum of those of DGAK on slices of different hierarchies.}
    \label{fig:framework}
\end{figure*}

\section{The Model}
\label{sec:TheModel}

{\namemodel} defines graph kernels by hierarchically aligning relational substructures to cluster distributions. The framework of DHGAK is presented in Figure \ref{fig:framework}.
Specifically, Section~3.1 introduces the notations used in this paper; Section 3.2 describes how to encode a slice into a unique sequence and then embed it into a deep vector space via Natural Language Models (NLM); In Section 3.3, the Deep Alignment Kernel (DAK) is defined and the feature map of {\namemodel} is obtained in a distribution perspective; Section 3.4 gives the theoretical analysis of {\namemodel}.

\subsection{Notation}
The notations used in this paper are as follows. We consider undirected and labeled graphs. 
Given a graph dataset \(\mathcal{D}=\{G_1,...,G_N\}\), graph \( G_i = (\mathcal{V}_i, \mathcal{E}_i, l) \), where \(\mathcal{V}_i\) and \(\mathcal{E}_i\) are the sets of nodes and edges of graph \(G_i\), respectively. \(l: \mathcal{V}_i\to \Sigma\) is a function that assigns labels in the label alphabet \(\Sigma\) to nodes. 
For the graph classification problem, each graph \(G_i\) is assigned a class label $y_i\in\{1,2,\ldots,c\}$ ($c$ is the number of classes).

\subsection{Hierarchical Neighborhood Structure Encoding}

\label{sec:SliceEncoding}
First, we define the \(b\)-width \(h\)-hop slice of a node as follows:

\begin{definition}[\(b\)-width \(h\)-hop Slice of Node]
    Given an undirected graph \(G=(\mathcal{V},\mathcal{E})\), rooted at each node \(v\in \mathcal{V}\), we build a truncated BFS tree \(T_v\) of depth \(r\). All the nodes in \(T_v\) denoted as \(N^r(v)\) and all the leaf nodes (nodes in the \(r\)-th depth) in \(T_v\) denoted as \(N_r(v)\) are defined as the \(r\)-width and the \(r\)-hop neighbors of \(v\), respectively. Then, the \(b\)-width \(h\)-hop slice of \(v\) can be defined as \(N_h^b(v) = \bigcup_{u\in N_h(v)} N^b(u)\).
\end{definition}


It is worth noting that when \(b = 0\) the \(b\)-width neighbor of each node \(v \in \mathcal{V}\) is the node itself, which is also equivalent to its \(b\)-hop neighbor, i.e., \(N^0(v) = N_0(v) = \{v\}\). In Figure~\ref{fig:b=0h=123}, we set the slice width \(b\) as 0 and vary the slice hop \(h\) from 0 to 2. Then we get the \(0\)-width \(0\)-hop slice, \(0\)-width \(1\)-hop slice and \(0\)-width \(2\)-hop slice of \(v_1\), i.e., \(N_0^0(v_1)\), \(N_1^0(v_1)\) and \(N_2^0(v_1)\), which capture the neighborhood structure information of \(v_1\) in different hierarchies. In Figure \ref{fig:b=1h=1}, we show that the 1-width 1-hop slice \(N_1^1(v_1)\) of node \(v_1\) is the union set of \(N^1(v_2), N^1(v_3)\), and \(N^1(v_4)\).


\begin{figure}[!htbp]
    \hspace*{\fill}
    \centering
    \subfloat[\(b=0, h\in\{0,1,2\}\)]{
    \includegraphics[width=0.45\linewidth]{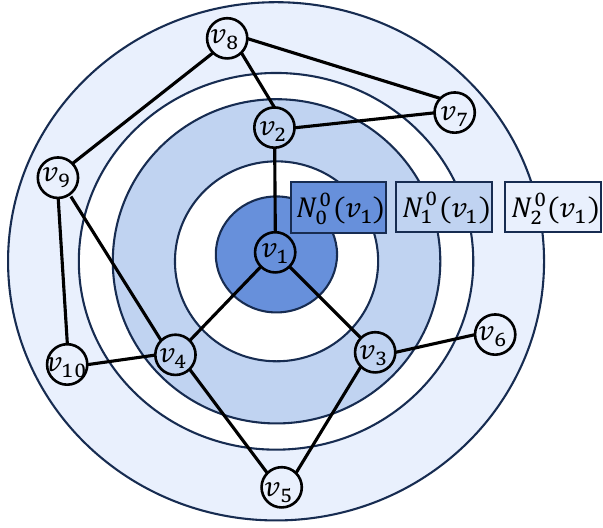}
    \label{fig:b=0h=123}
    }
    \hfill
    \hfill
    \subfloat[\(b=1, h=1\)]{
    \includegraphics[width=0.45\linewidth]{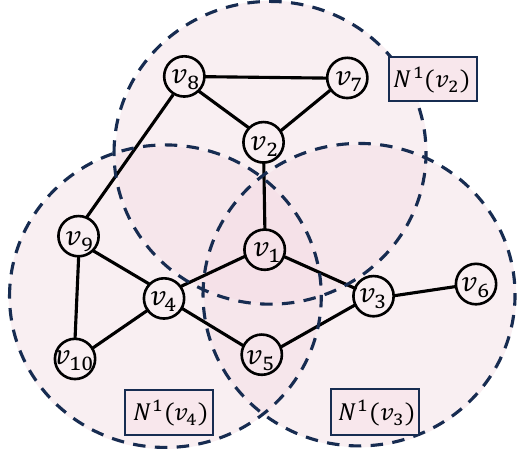}
    \label{fig:b=1h=1}
    }
    \hspace*{\fill}
    \caption{(a) shows \(b\)-width \(h\)-hop slice of node \(v_1\), where \(b\) is fixed as 0 and \(h\) ranges from 0 to 2. (b) shows 1-width 1-hop slice of node \(v_1\), where \(N_1^1(v_1)\) is the union set of \(N^1(v_2), N^1(v_3)\), and \(N^1(v_4)\) since \(v_2,v_3,v_4\in N_1(v_1)\).}
    \label{fig:encodingOfSlice}
\end{figure}

Then, we encode each slice into a unique sequence \(S_h^b(v)\). Given width \(b\) and hop \(h\), for each node \(v\) in the graph, we find its \(h\)-hop neighbors \(N_h(v)\) and sort them by their eigenvector centrality (EC) \cite{eigenvectorCentrality} values in descending order. For each node \(u\) in the sorted \(N_h(v)\), we concatenate the nodes in its \(i\)-hop neighbors \(N_i(u)\), sorted by EC in descending order to get \(S_i(u)\). Then, we concatenate \(S_i(u)\) from \(i=0\) to \(b\) to obtain \(S^b(u)\). The encoding of slice \(S^b_h(v)\) is the concatenation of \(S^b(u)\) for all the nodes \(u\) in \(N_h(v)\). Finally, each node in the encoding of the slice is assigned a label by applying \(l(\cdot)\).
For example, in Figure \ref{fig:b=0h=123}, \(S^0_1(v_1)=l([v_4, v_3, v_2])\); and in Figure \ref{fig:b=1h=1},  \(S^1_1(v_1) = l([v_4, v_9, v_1, v_{10}, v_5 ] \ \Vert \ [ v_2, v_1, v_8, v_7 ] \ \Vert \ [v_3, v_1, v_5, v_6])\), where \(\Vert\) concatenates different \(S^b(u)\). The pseudo-code is given in Algorithm \ref{alg:slice_encoding} in \ref{psucode:EncodingSlice}.

In $\mathcal{R}$-convolution graph kernels, the similarity value between each pair of slices is one (if they have the same encoding) or zero (if they have different encodings even the difference is just a little bit). This strict comparison does not consider the fact that there exists some similarity even in two slices that have different encodings. Several studies~\cite{perozzi2014deepwalk,yanardag2015deepGraphKernels} have shown that the substructures in graphs follow a power-law distribution, just like the word distribution in natural language. Thus, to circumvent the above difficulty, we generate a deep embedding for each slice to soften the strict similarity comparison and to compute slice similarity more accurately by employing Natural Language Models (NLM).


More specifically, given a slice encoding, we consider each node label in the encoding as a word and the whole encoding as a sentence. Then an NLM can be trained to learn the embedding of each node label. We define the deep embedding of a slice as follows:

\begin{definition}[Deep Slice Embedding]
Given a label embedding function (learned by NLMs) \(g: \Sigma \to \mathbb{R}^d\), where \(d\) is the dimension of the embedding. The embedding of a \(b\)-width \(h\)-hop slice of node \(v\) is defined as follows:
\begin{equation}
    \mathbf{x}_h^b(v) = \alpha \cdot \mathbf{x}_{h-1}^b(v) + \sum_{u \in S_h^b(v)}g(u)
    \label{eq:node_emb}
\end{equation}
where \(\mathbf{x}_0^b(v):= g(l(v))\), \(\alpha\) is a decay coefficient of the \((h-1)\)-th hop's node label embedding and \(S_h^b(v)\) is the encoding of the \(b\)-width \(h\)-hop slice of node \(v\).
\label{df:slice-emb}
\end{definition}

\subsection{Deep Hierarchical Graph Alignment Kernels}


To consider the topological position information of slices, we propose to align slices from different graphs by clustering in their deep embedding space. Slices within the same cluster are aligned and have the same feature map. This not only addresses the neglect of the topological position information of substructures in conventional $\mathcal{R}$-convolution graph kernels but also reduces the computational overhead since comparison will only be carried out on slices from different clusters.
We first define the Deep Alignment Kernel as follows:



\begin{definition}[Deep Alignment Kernel (DAK)]
\label{df:dak}
    Let \(\Psi \subset \mathfrak{M}\) be an arbitrary set of clustering methods chosen from all admissible clustering methods \(\mathfrak{M}\). Let \(U(\Psi)\) denote a uniform distribution on \(\Psi\). For any clustering method \(\psi: \mathbf{X}_h^b \to \mathcal{C_\psi}\) in \(\Psi\), it separates each point (b-width h-hop slice embedding) \(\mathbf{x} \in \mathbf{X}_h^b\) into one of \(|\mathcal{C_\psi}|\) clusters, i.e., \(\mathcal{C_\psi} = \{\mathcal{C}_{\psi,1}, \mathcal{C}_{\psi,2}, \ldots, \mathcal{C}_{\psi, |\mathcal{C_\psi}|}\}\). The Deep Alignment Kernel quantifies the likelihood of assigning two points (slice embeddings), \(\mathbf{x}, \mathbf{y} \in \mathbf{X}_h^b \), to the same cluster across all clustering methods in the set \(\Psi\), which is defined as follows:
       \begin{equation}
        \kappa_h^b(\mathbf{x}, \mathbf{y}) = \mathbb{E}_{\psi \sim U(\Psi)} [\mathbb{I}(\mathbf{x}, \mathbf{y} \in \mathcal{C}| \mathcal{C}\in \mathcal{C}_\psi)] 
    \label{eq:uck1}
    \end{equation}
\end{definition}

Given that many clustering methods (e.g., K-means \cite{k-means}) heavily rely on the initialization points, we perform each clustering method \(T\) times to approximate the expectation. The approximation is defined as:
\begin{equation}
     \label{eq:gak2}
         \kappa_h^b(\mathbf{x}, \mathbf{y}) \simeq \frac{1}{T | \Psi | } 
        \sum_{\psi \in \Psi} \sum_{t=1}^{T} 
        \sum_{i=1}^{|\mathcal{C}_\psi |} \mathbb{I}(\mathbf{x}\in \mathcal{C}^{(t)}_{\psi,i}) \mathbb{I}(\mathbf{y}\in \mathcal{C}^{(t)}_{\psi,i})
\end{equation}
where \(\mathbb{I}(a)\) is the indicator function that \(\mathbb{I}(a)=1\) if \(a\) is true, otherwise \(\mathbb{I}(a)=0\). \(\mathcal{C}^{(t)}_{\psi,i}\) represents the \(i\)-th cluster at the \(t\)-th experiment under clustering method \(\psi\).

\begin{theorem}
    DAK is positive semi-definite.
\label{th:semi-pos-DAK}
\end{theorem}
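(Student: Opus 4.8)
The plan is to write DAK as an inner product of an explicit feature map: any kernel of the form $\kappa(\mathbf{x},\mathbf{y})=\langle\Phi(\mathbf{x}),\Phi(\mathbf{y})\rangle$ in a Hilbert space is automatically positive semi-definite, so identifying such a $\Phi$ settles the claim. The natural building block is the cluster-indicator (one-hot) encoding induced by a single clustering method, and I would first establish positive semi-definiteness at that level before assembling the full kernel by averaging.

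For a fixed clustering method $\psi\in\Psi$ (and, in the empirical form, a fixed experiment $t$), I would define the indicator vector
\begin{equation}
    \phi_{\psi}(\mathbf{x}) = \bigl(\mathbb{I}(\mathbf{x}\in\mathcal{C}_{\psi,1}),\dots,\mathbb{I}(\mathbf{x}\in\mathcal{C}_{\psi,|\mathcal{C}_\psi|})\bigr)^{\top}\in\mathbb{R}^{|\mathcal{C}_\psi|}.
\end{equation}
Since a clustering method partitions the points, $\phi_\psi(\mathbf{x})$ is one-hot, and the event that $\mathbf{x}$ and $\mathbf{y}$ land in a common cluster is captured exactly by $\mathbb{I}(\mathbf{x},\mathbf{y}\in\mathcal{C}\mid\mathcal{C}\in\mathcal{C}_\psi)=\sum_{i}\mathbb{I}(\mathbf{x}\in\mathcal{C}_{\psi,i})\mathbb{I}(\mathbf{y}\in\mathcal{C}_{\psi,i})=\langle\phi_\psi(\mathbf{x}),\phi_\psi(\mathbf{y})\rangle$. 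Hence for each fixed $\psi$ the map $(\mathbf{x},\mathbf{y})\mapsto\langle\phi_\psi(\mathbf{x}),\phi_\psi(\mathbf{y})\rangle$ is a valid positive semi-definite kernel.

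It then remains only to combine these per-clustering kernels. For the definition in Equation~\ref{eq:uck1}, $\kappa_h^b$ is the expectation $\mathbb{E}_{\psi\sim U(\Psi)}[\langle\phi_\psi(\mathbf{x}),\phi_\psi(\mathbf{y})\rangle]$, i.e.\ an integral of positive semi-definite kernels against the probability measure $U(\Psi)$; since the cone of positive semi-definite kernels is closed under nonnegative linear combinations and, more generally, under integration against a nonnegative measure, $\kappa_h^b$ is positive semi-definite. For the empirical approximation in Equation~\ref{eq:gak2}, the same conclusion is even more transparent: concatenating the scaled feature maps $\tfrac{1}{\sqrt{T|\Psi|}}\phi_\psi^{(t)}$ over all $\psi\in\Psi$ and $t\in\{1,\dots,T\}$ yields a single finite-dimensional feature map $\Phi$ whose inner product reproduces Equation~\ref{eq:gak2} verbatim, again exhibiting $\kappa_h^b$ as a Gram kernel.

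I do not expect a genuine obstacle; the proof is structurally routine once the indicator feature map is identified. The only point requiring care is the tacit assumption that each admissible clustering method returns a genuine partition, so that $\phi_\psi$ is well defined and the key identity $\mathbb{I}(\mathbf{x},\mathbf{y}\in\mathcal{C})=\langle\phi_\psi(\mathbf{x}),\phi_\psi(\mathbf{y})\rangle$ holds exactly. Should overlapping or soft assignments be permitted, I would let $\phi_\psi(\mathbf{x})$ record the nonnegative membership weights instead, and the inner-product representation---and therefore positive semi-definiteness---would persist unchanged.
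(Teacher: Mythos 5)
Your proposal is correct and takes essentially the same route as the paper: both arguments hinge on the identity $\mathbb{I}(\mathbf{x},\mathbf{y}\in\mathcal{C}\mid\mathcal{C}\in\mathcal{C}_\psi)=\sum_{i}\mathbb{I}(\mathbf{x}\in\mathcal{C}_{\psi,i})\,\mathbb{I}(\mathbf{y}\in\mathcal{C}_{\psi,i})$, after which the paper checks $\mathbf{x}^{T}\mathbf{K}\mathbf{x}\ge 0$ directly by rewriting the quadratic form as an expectation of sums of squares---which is just the unpacked version of your Gram-kernel/feature-map argument, with the expectation step playing the role of your closure-under-averaging observation. Note also that the paper's subsequent Definition of the DAK feature map ($\phi_h^b$ in Equation \ref{eq:map-uck}) is exactly your concatenated indicator map scaled by $1/\sqrt{T|\Psi|}$, so your construction matches the paper's verbatim.
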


With Theorem \ref{th:semi-pos-DAK} (see \ref{proof:DAK_pos} for the proof), we can define explicit feature maps in the Reproducing Kernel Hilbert Space (RKHS) \(\mathcal{H}\) for the DAK as follows:



\begin{definition}[Feature Map of DAK]
\label{df:featureMapDAK}
    Given the embeddings \(\mathbf{X}_h^b\) of b-width h-hop slices, let \(\mathbbm{1}^{(t)}_\psi: \mathbf{X}_h^b \to \{0, 1\}^{| \mathcal{C}^{(t)}_\psi |}\) be a cluster indicator of clustering method \(\psi\) at the \(t\)-th experiment. For any \(\mathbf{x} \in \mathbf{X}_h^b \), \(\mathbbm{1}^{(t)}_\psi(\mathbf{x})_i = 1\) if and only if \(\mathbf{x} \in \mathcal{C}^{(t)}_{\psi,i} \in \mathcal{C}^{(t)}_\psi\), otherwise \(\mathbbm{1}^{(t)}_\psi(\mathbf{x})_i = 0\). Then, the feature map \(\phi_h^b\) of the DAK after $T$ experiments is defined as:
    \begin{equation}
        \phi_h^b(\mathbf{x}) = \frac{1}{\sqrt{T| \Psi |}}\parallel_{\psi \in \Psi}\parallel_{t=1}^T\mathbbm{1}^{(t)}_\psi(\mathbf{x}) 
    \label{eq:map-uck}
    \end{equation}
    where $\parallel$ means concatenation, \(\phi_h^b(\mathbf{x}) \in \mathbb{R}^{ T (\sum_{\psi \in \Psi}| \mathcal{C}_\psi | )}\) represents the concatenation of all \(\mathbbm{1}^{(t)}_\psi(\mathbf{x})\) for $T$ experiments of \(\psi\) in the clustering method set $\Psi$. 
\end{definition}

Therefore, Equation \ref{eq:gak2} can be rewritten as:
\begin{equation}
    \kappa_h^b (\mathbf{x},\mathbf{y})=\left\langle  \phi_h^b(\mathbf{x}), \phi_h^b(\mathbf{y}) \right\rangle 
    \label{eq:re-uck}
\end{equation}

Note that Equation~\ref{eq:re-uck} can directly compute the similarity between two slices extracted around two nodes residing in the same or different graphs. We extract the $b$-width $h$-hop slices around all the nodes in each graph. For the feature maps of DAK in each graph, we treat them as i.i.d. samples from an unknown distribution.  Next, we utilize kernel mean embedding \cite{kernel-mean-embedding} to construct the Deep Graph Alignment Kernel (DGAK) between two graphs $G_1$ and $G_2$ from the perspective of distribution as follows:
\begin{equation}
\begin{split}
\mathcal{K}_h^b(G_1,G_2)&=\langle \frac{1}{|\mathbf{X}_h^b|}\sum_{\mathbf{x}\in\mathbf{X}_h^b}\phi_h^b(\mathbf{x}), \frac{1}{|\mathbf{Y}_h^b|}\sum_{\mathbf{y}\in\mathbf{Y}_h^b}\phi_h^b(\mathbf{y}) \rangle\\
&=\frac{1}{|\mathbf{X}_h^b||\mathbf{Y}_h^b|}\sum_{\mathbf{x}\in\mathbf{X}_h^b}\sum_{\mathbf{y}\in\mathbf{Y}_h^b}\kappa_h^b (\mathbf{x},\mathbf{y})
\end{split}
\label{eq:DGAK}
\end{equation}
where $\mathbf{X}_h^b$ and $\mathbf{Y}_h^b$ are the deep embeddings of $b$-width $h$-hop slices in graphs $G_1$ and $G_2$, respectively.

Finally, our Deep Hierarchical Graph Alignment Kernel (DHGAK) is defined as follows:
\begin{equation}
\mathcal{K}(G_1,G_2)=\sum_{h=1}^H\mathcal{K}_h^b(G_1,G_2)
\label{eq:DHGAK}
\end{equation}

Since the sum of positive semi-definite kernels is also positive semi-definite, both DGAK and DHGAK are positive semi-definite.






\subsection{Theoretical Analysis of {\namemodel}}
\label{sec:theoreticAnalysis}
We explicit the transitivity of alignment operation in DAK, DGAK, and DHGAK at first. Given the set of clustering method \(\Psi\), experiment times \(T\in\mathbb{N}\) and \(\mathbf{x}, \mathbf{y} \in \mathbf{X}_h^b\). Let the \(\mathcal{M}_{h, \psi}^{b, t}: \mathbf{X}_h^b \times \mathbf{X}_h^b \to \{0,1\}\) be the alignment operation at the \(t\)-th experiment under \(\psi\in\Psi\) , which satisfies:
\begin{equation}
    \mathcal{M}_{h, \psi}^{b, t}(\mathbf{x}, \mathbf{y}) = 
    \begin{cases}
        1 & \text{ if } \mathbf{x} \text{ and } \mathbf{y}  \text{ are assigned to the same} \\  & \text{ cluster at the } t \text{-th experiment under }\psi \text{;} \\
        0 & \text{otherwise.}
    \end{cases}
\end{equation}

Therefore, DAK can be re-expressed as:
\begin{equation}
    \kappa_h^b(\mathbf{x}, \mathbf{y})= \frac{1}{T|\Psi|}
    \sum_{\psi\in\Psi}\sum_{t=1}^T \mathcal{M}_{h, \psi}^{b, t}(\mathbf{x}, \mathbf{y})
\end{equation}

DGAK and DHGAK can also be re-expressed in the sum of alignment operations. The theorem (See \ref{proof:transitive} for the proof) below shows the alignment operation is transitive.
    \begin{theorem}
        The alignment operation \(\mathcal{M}_{h,\psi}^{b,t}\) in DAK, DGAK and DHGAK is transitive.
        \label{th:transitive}
        \end{theorem}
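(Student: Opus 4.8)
The plan is to reduce the claim to the single defining property of any clustering method: that it partitions the slice-embedding space $\mathbf{X}_h^b$ into pairwise disjoint clusters. Note first that $\mathcal{M}_{h,\psi}^{b,t}$ is the atomic alignment operation shared by all three kernels; DAK, DGAK, and DHGAK differ only in \emph{how} they aggregate these operations (by averaging over $\psi$ and $t$, by kernel mean embedding over slices, and by summing over the hops $h$, respectively), not in the operation itself. Hence it suffices to fix an arbitrary hop $h$, width $b$, method $\psi\in\Psi$, and experiment index $t$, and to show that $\mathcal{M}_{h,\psi}^{b,t}$, viewed as a binary relation on $\mathbf{X}_h^b$, is transitive.

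First I would invoke Definition~\ref{df:dak}, which states that $\psi$ separates each point $\mathbf{x}\in\mathbf{X}_h^b$ into exactly one of the clusters $\mathcal{C}^{(t)}_{\psi,1},\ldots,\mathcal{C}^{(t)}_{\psi,|\mathcal{C}_\psi|}$. This means these clusters are pairwise disjoint, so there is a well-defined assignment map $c$ sending each $\mathbf{x}$ to the unique index with $\mathbf{x}\in\mathcal{C}^{(t)}_{\psi,c(\mathbf{x})}$; by the definition of the alignment operation, $\mathcal{M}_{h,\psi}^{b,t}(\mathbf{x},\mathbf{y})=1$ holds if and only if $c(\mathbf{x})=c(\mathbf{y})$. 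The core step is then immediate: assuming $\mathcal{M}_{h,\psi}^{b,t}(\mathbf{x},\mathbf{y})=1$ and $\mathcal{M}_{h,\psi}^{b,t}(\mathbf{y},\mathbf{z})=1$ gives $c(\mathbf{x})=c(\mathbf{y})$ and $c(\mathbf{y})=c(\mathbf{z})$, hence $c(\mathbf{x})=c(\mathbf{z})$ and therefore $\mathcal{M}_{h,\psi}^{b,t}(\mathbf{x},\mathbf{z})=1$. Since this argument is uniform in $h,b,\psi,t$, it covers the operation exactly as it occurs in DAK, DGAK, and DHGAK.

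Because the derivation is this short, the real work is conceptual rather than technical, and the hard part is simply to state transitivity for the correct object — the $\{0,1\}$-valued operation $\mathcal{M}_{h,\psi}^{b,t}$ rather than the real-valued kernels themselves — and to isolate disjointness of clusters as the property doing all the work. I would close by contrasting this with the optimal-transport alignment used in WWL: there a single $\mathbf{y}$ can be fractionally matched to several unrelated points, so the uniqueness of $c(\mathbf{y})$ fails and transitivity breaks; hard clustering precludes exactly this, which is why DHGAK attains the transitivity (and the attendant positive semi-definiteness) that optimal-transport kernels lack.
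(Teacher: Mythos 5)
Your proof is correct and takes essentially the same approach as the paper: the paper's own proof also argues that if $(\mathbf{x},\mathbf{y})$ and $(\mathbf{y},\mathbf{z})$ are each aligned, then all three points lie in the same cluster, so $\mathcal{M}_{h,\psi}^{b,t}(\mathbf{x},\mathbf{z})=1$. Your version merely makes explicit what the paper leaves implicit — that hard clustering assigns each point to a \emph{unique} cluster (your assignment map $c$), which is exactly the disjointness property doing the work in both arguments.
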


Secondly, we care about the feasibility of using kernel mean embedding in Equation \ref{eq:DGAK}. We treat the feature maps of DAK in each graph as i.i.d. samples from an unknown distribution \(\mathbb{G}\) and take kernel mean embedding of them to construct DGAK and DHGAK. When considering distribution classification tasks, Distributional Risk Minimization (DRM) \cite{muandet2012learningfromdit} assures that any optimal solutions of any loss function can be expressed as a finite linear combination of the mean embedding of \(\mathbb{G}_1, \ldots, \mathbb{G}_N\). DRM guarantees the feasibility of graph kernels derived by kernel mean embedding for graph classification problems.

Finally, 
Theorem \ref{th:linearSeparationPropertyOfDHGAK} is introduced to assure that there exist some sets of clustering methods \(\Psi_o \subset \mathfrak{M}\), derived from which the feature map of {\namemodel} is linearly separable in its RKHS for any dataset.  

\begin{theorem}[Linear Separation Property of {\namemodel}]
    Given a graph dataset \(\mathcal{D}=\{(G_i, y_i)\}_{i=1}^{N}\), let \(\Psi_o\) denote the set of clustering method \(\{\psi_{j}\}_{j=1}^{|\Psi_o|}\), where \(\psi_j: \mathbf{X}_h^b \to \mathcal{C}_{\psi_{j}}\) separates each point \(\mathbf{x}_h^b\) (the embedding of a slice extracted around a node $v$) to the cluster \(\mathcal{C}_{\psi_j, k}\) if and only if \(v \in \mathcal{V}_k\) belonging to \(G_k\). For any \(H,T\in \mathbb{N}\), The feature map of DHGAK is linearly separable for one class graphs \(\{(G_p, y_p)\}\) from the other class graphs \(\{(G_q, y_q): y_q \ne y_p\}\).
    \label{th:linearSeparationPropertyOfDHGAK}
\end{theorem}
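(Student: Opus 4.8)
The plan is to exploit the very special structure of the oracle clustering set \(\Psi_o\), which by construction assigns every slice to a cluster determined solely by the identity of its source graph. First I would observe that under any \(\psi_j \in \Psi_o\), the cluster indicator \(\mathbbm{1}_{\psi_j}^{(t)}(\mathbf{x})\) of a slice \(\mathbf{x}\) extracted around a node \(v \in \mathcal{V}_k\) is exactly the standard basis vector \(e_k\), and that this assignment is deterministic—hence identical across all experiments \(t = 1, \ldots, T\) and all methods in \(\Psi_o\). Substituting into the DAK feature map (Equation \ref{eq:map-uck}), every slice of \(G_k\) is mapped to the same vector \(\phi_h^b(\mathbf{x}) = \frac{1}{\sqrt{T|\Psi_o|}}(e_k \parallel \cdots \parallel e_k)\) consisting of \(T|\Psi_o|\) identical blocks, which is a unit vector depending only on \(k\).

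Next I would push this through the kernel mean embedding defining DGAK (Equation \ref{eq:DGAK}). Because all slices of \(G_k\) share the identical DAK feature, their average collapses to that same vector, so the DGAK feature map \(\Phi_h^b(G_k) := \frac{1}{|\mathbf{X}_h^b|}\sum_{\mathbf{x}} \phi_h^b(\mathbf{x})\) equals the per-slice feature and is again concentrated on coordinate \(k\). In particular \(\langle \Phi_h^b(G_k), \Phi_h^b(G_{k'})\rangle = \delta_{kk'}\). Concatenating across the \(H\) hops to form the DHGAK feature map \(\Phi(G) = (\Phi_1^b(G), \ldots, \Phi_H^b(G))\), which is the feature map of the sum of kernels in Equation \ref{eq:DHGAK}, yields the clean orthogonality relation \(\langle \Phi(G_k), \Phi(G_{k'})\rangle = H\,\delta_{kk'}\). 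Thus the \(N\) graph feature maps are mutually orthogonal, hence linearly independent.

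Finally, linear separability follows immediately from orthogonality. Fixing the target class \(y_p\), I would exhibit the explicit weight vector \(w = \frac{1}{H}\big(\sum_{k: y_k = y_p} \Phi(G_k) - \sum_{k: y_k \ne y_p}\Phi(G_k)\big)\) in the RKHS. Using \(\langle \Phi(G_k), \Phi(G_j)\rangle = H\delta_{kj}\), only the \(k = j\) term survives, giving \(\langle w, \Phi(G_j)\rangle = +1\) when \(y_j = y_p\) and \(-1\) when \(y_j \ne y_p\); the sign of \(\langle w, \Phi(G_j)\rangle\) therefore separates the two classes for arbitrary \(H,T \in \mathbb{N}\).

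The main obstacle is not the algebra, which is routine once the orthogonality is established, but rather arguing cleanly that the oracle clustering collapses all intra-graph variability. One must verify that the construction of \(\Psi_o\) is legitimate within \(\mathfrak{M}\)—that a graph-identifying clustering is admissible—and that the per-slice DAK features are genuinely constant over a graph despite the \(T\)-fold sampling and the summation over \(S_h^b(v)\) in the slice embeddings of Equation \ref{eq:node_emb}. Once that collapse is justified, the kernel mean embedding preserves the one-hot structure and the separation argument reduces to a statement about an orthonormal-like family of graph representations.
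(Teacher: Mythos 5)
Your proof is correct, and it reaches the conclusion by a genuinely different route than the paper. Both arguments start the same way: under the oracle set \(\Psi_o\), every cluster indicator of a slice from \(G_k\) is the basis vector \(\mathbf{e}_k\), deterministically across all \(t\) and all \(\psi_j\), so the DAK feature of every slice of \(G_k\) collapses to the same unit vector, the kernel mean embedding preserves it, and the DHGAK feature map of \(G_k\) is (up to coordinate ordering) \(\frac{1}{\sqrt{T|\Psi_o|}}\,\mathbf{e}_k\) repeated over \(T H |\Psi_o|\) blocks. From there the paper proves a separate lemma (Lemma \ref{lemma1}) constructing an explicit affine hyperplane \(f^\star(\mathbf{x}) = {\mathbf{w}^\star}^T\mathbf{x} + b^\star\) via a determinant with coefficients \(c_i = 1 \pm \mathrm{sign}(y_i)\beta_i\), and then lifts \((\mathbf{w}^\star, b^\star)\) to the RKHS by a Kronecker product, obtaining \(y_k\hat{f}(\hat{\Phi}(G_k)) = H\sqrt{T|\Psi_o|}({\mathbf{w}^\star}^T\mathbf{e}_k + b^\star) > 0\). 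You instead observe the orthogonality relation \(\langle \Phi(G_k), \Phi(G_j)\rangle = H\delta_{kj}\) (which matches the paper's geometry exactly) and write the separator in dual form, \(w = \frac{1}{H}\bigl(\sum_{k: y_k = y_p}\Phi(G_k) - \sum_{k: y_k \neq y_p}\Phi(G_k)\bigr)\), giving margins exactly \(\pm 1\) with zero bias. Your argument is more elementary and transparent: it eliminates the paper's determinant lemma entirely, replaces the Kronecker-product bookkeeping with a one-line inner-product computation, and produces an explicit representer-style weight vector with quantified margin; the paper's construction buys nothing beyond an affine (rather than homogeneous) separator, which the definition of linear separability does not require. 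One remark on your closing caveat: the ``collapse of intra-graph variability'' you worry about is immediate from the definition of \(\Psi_o\), since the cluster assignment depends only on which graph the node \(v\) belongs to and not on the embedding values; and the admissibility of \(\Psi_o\) within \(\mathfrak{M}\) is a definitional point the paper itself does not address (it explicitly notes \(\Psi_o\) is used only for theoretical derivation), so neither concern affects the validity of your proof.
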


The proof of Theorem \ref{th:linearSeparationPropertyOfDHGAK} is given in \ref{proof:LinearSeparationPropertyOfDHGAK}. It's worth noticing that \(\Psi_o\) is only utilized for theoretical derivations. In addition, the theorem offers a rough estimator for the number of clusters, denoted as $|\mathcal{C_\psi}| \simeq |\mathcal{D}|$. This estimation is useful for the detailed implementations of {\namemodel}.

The worst-case time complexity of {\namemodel} is bounded by \(\mathcal{O}(HN(n^2 + ne) \times \mathcal{T}_{NLM} + HT|\Psi| \times \mathcal{T}_C(Nn))\), where \(\mathcal{T}_{NLM}\) is the inference time complexity of the NLM per token, \(n, e\) are the average number of nodes and edges in graphs, and \(\mathcal{T}_C(Nn)\) is the average time complexity of the clustering methods we select. Refer to 
Appendix \ref{sec:TimeComplexityAnalysis} for a more detailed analysis of the time complexity of {\namemodel}.



In short, DHGAK is proven to be alignment transitive, and theoretically, for any dataset, there exists a family of clustering methods derived from which DHGAK's feature maps are linearly separable in the RKHS.

\section{Experimental Evaluation}

In this section, we analyze the performance of {\namemodel} compared to some state-of-the-art graph kernels. First, we give the experimental setup and introduce two realizations of {\namemodel}. Next, we demonstrate the performance of {\namemodel} on graph classification, the parameter sensitivity analysis of {\namemodel}, and the ablation study to reveal the contribution of each component in {\namemodel}. Finally, we compare the running time of each graph kernel.

\subsection{Experimental Setup}
\label{sec:experimentalSetup}

The generalization of {\namemodel} allows one to use any combination of Natural Language Models (NLM) for deep embedding and clustering methods for alignment to implement {\namemodel}. The NLMs we select are BERT \cite{BERT} and word2vec \cite{word2vec}, and the clustering method we use is K-means. By doing so, we get two realizations of {\namemodel}: DHGAK-BERT and DHGAK-w2v. See \ref{sec:detailsOfNLM} and \ref{sec:additionReals} for more details of the NLMs and realizations.

We evaluate {\namemodel} on 16 real-world datasets downloaded from \cite{KKMMN2016Benchmark}. For datasets without node labels, we use the degree of each node as its node label. More details about datasets are provided in \ref{sec:detailsOfDatasets}. All experiments were conducted on a server equipped with a dual-core Intel(R) Xeon(R) Gold 6226R CPU @ 2.90GHz, 256 GB memory, and Ubuntu 18.04.6 LTS with 6 RTX 3090 GPUs. 

\begin{table*}[!htbp]
    \centering
    \resizebox{\textwidth}{!}{%
    \begin{tabular}{l|c|c|c|c|c|c|c|c|c|c|c}
        \toprule
        Dataset&  {\namemodel}-BERT &{\namemodel}-w2v&  PM&  WL&  WWL&  FWL&  RetGK&  WL-OA&  GWL& GAWL & GraphQNTK\\
        \midrule
        KKI & $\mathbf{58.9\pm8.2}$ &$56.7\pm5.2$
& $52.3\pm2.5$& $50.4\pm2.8$& $55.4 \pm 5.1$& $54.3\pm19.8$& $48.5\pm3.0$& $54.4 \pm12.8$& $\underline{57.8 \pm4.0^\dag}$& $56.8\pm23.1$&$56.8\pm22.9$\\
        MUTAG 
        &  $\mathbf{90.9\pm4.2}$ &$\underline{90.4\pm4.6}$
&  $86.7 \pm0.6^\dag$&  $82.1\pm0.4^\dag$&  $87.3\pm1.5^\dag$&  $85.7\pm7.5$&  $90.3\pm1.1^\dag$&  $84.5\pm1.7^\dag$&  $86.0\pm1.2$&  $87.3\pm 6.3^\dag$&  $88.4\pm6.5^\dag$\\
        PTC\_MM 
        &  $\underline{70.5\pm4.6}$&$\mathbf{70.9\pm5.6}$&  $63.4 \pm 4.4$&  $67.2\pm1.6$&  $69.1\pm5.0$&  $67.0\pm5.1 $&  $67.9\pm1.4^\dag$&  $65.2 \pm6.4$&  $65.1\pm1.8$&  $66.3\pm5.3 $& $66.4\pm12.7$\\
        PTC\_MR 
        &  $\mathbf{66.6\pm7.7}$ &$63.7\pm7.3$&  $60.2 \pm0.9^\dag$&  $61.3\pm0.9$&  $\underline{66.3\pm1.2^\dag}$&  $59.3\pm7.3$&  $62.5\pm1.6^\dag$&  $63.6\pm1.5^\dag$&  $59.9\pm1.6$&  $59.0\pm4.4$& $59.4\pm9.9$\\
        PTC\_FM 
        &  $\underline{66.2\pm5.2}$&$\mathbf{66.5\pm6.0}$&  $61.0 \pm3.9$&  $64.4\pm2.1$&  $65.3\pm6.2$&  $61.0\pm6.9$&  $63.9\pm1.3^\dag$&  $62.5 \pm6.3$&  $62.6 \pm1.9^\dag$&  $63.9\pm4.4$& $63.8\pm5.3$\\
        PTC\_FR 
        & $\mathbf{72.1\pm5.8}$ &$\underline{71.3\pm4.9}$& $67.8 \pm 2.2$& $66.2\pm1.0$& $67.3\pm4.2$& $67.2\pm4.6$& $67.8\pm1.1^\dag$& $68.4 \pm4.3$& $66.0\pm1.1$& $64.7\pm3.6$&$68.9\pm9.0$\\
        BZR 
        &  $\mathbf{89.4\pm3.3}$ &$88.4\pm4.5$&  $81.5 \pm 2.0$&  $87.
        3\pm 	 0.8$&  $80.5\pm1.7$&  $87.6\pm5.0$&  $86.4\pm1.2^\dag$&  $81.5 \pm 4.7$&  $87.7\pm0.4$&  $\underline{89.1\pm4.1}$& $83.1\pm4.7$\\
        COX2 
        &  $\mathbf{84.6\pm2.2}$ &$83.1\pm4.0$&  $78.2 \pm 0.8$&  $81.2\pm 	 1.1$&  $83.1 \pm3.0$&  $79.2\pm2.9$&  $80.1\pm0.9^\dag$&  $78.4 \pm 1.1$&  $82.9\pm1.1$&  $\underline{83.5\pm3.1}$& $79.5 \pm1.6$\\
        DHFR 
        &  $\mathbf{85.5\pm4.4}$ &$\underline{84.0\pm4.6}$&  $76.9 \pm 3.9$&  $82.4 	 \pm0.9$&  $82.7 \pm 3.5$&  $82.8\pm5.8$&  $81.5\pm0.9^\dag$&  $82.7 \pm4.5$&  $80.6\pm0.8$&  $83.7\pm3.0$& $72.1\pm4.8$\\
        ENZYMES 
        & $\mathbf{65.3\pm7.9}$ &$57.3\pm5.4$& $40.3 \pm0.3
        ^\dag$& $52.2\pm 1.3^\dag$& $59.1\pm0.8^\dag$& $51.8\pm5.5 $& $\underline{60.4\pm0.8^\dag}$& $59.9\pm1.1^\dag$& $54.5\pm1.6 $& $58.5\pm4.8 $&$35.8\pm5.3$\\
        PROTEINS 
        & $\mathbf{76.6\pm4.3}$&$75.2\pm4.6$& $74.8 \pm3.9$& $75.5\pm0.3$& $74.3\pm0.6^\dag$& $74.6\pm3.8$& $75.8\pm0.6^\dag$& $\underline{76.4\pm 0.4 ^\dag}$& $73.7\pm0.5 $& $74.7\pm3.0^\dag$&$71.1 \pm3.2^\dag$\\
        DD 
        & $\underline{81.0\pm4.7}$&$80.1\pm3.1$& $ 77.8 \pm0.5^\dag$& $79.8\pm0.4^\dag$& $79.7\pm0.5^\dag$& $78.4\pm2.4$& $\mathbf{81.6\pm0.3^\dag}$& $79.2\pm0.4^\dag$& $79.0\pm0.8^\dag$& $78.7\pm2.8^\dag$&$79.6\pm4.7$\\
        NCI1 &  $\underline{85.9\pm2.0}$&$85.2\pm2.1$&  $ 72.9\pm0.5^\dag$&  $82.2 \pm 0.2^\dag$&  $85.8\pm0.3^\dag$&  $85.4\pm1.6$&  $84.5\pm0.2^\dag$&  $\mathbf{86.1\pm0.2^\dag}$&  $85.3\pm0.4^\dag$&   $\underline{85.9 \pm1.2^\dag}$& $77.2 \pm2.7^\dag$\\
        IMDB-B& $\mathbf{75.3\pm2.7}$ &$73.5\pm1.9$& $74.8 \pm 2.3$& $73.8\pm3.9 $& $73.3\pm4.1 $& $72.0\pm4.6$& $71.9\pm1.0^\dag$& $\underline{75.2\pm2.8}$& $73.7\pm1.3^\dag$& $74.5\pm4.1^\dag$&$73.3 \pm3.6^\dag$\\
        IMDB-M& $\mathbf{52.1\pm2.4}$ &$50.
        0\pm2.3$& $50.7 \pm 2.8$& $51.1\pm2.7$& $50.4\pm3.4$& $50.0\pm2.9$& $47.7\pm0.3^\dag$& $50.9 \pm 2.8$& $50.6\pm0.5$& $\underline{51.7\pm 5.2^\dag}$& $48.1 \pm 4.3^\dag$\\
        COLLAB& $\mathbf{81.7\pm1.5}$ &$\underline{81.6\pm1.7}$& $80.5 \pm1.5$& $78.7\pm1.5$& $79.1\pm1.5$& $78.1\pm1.2$& $81.0\pm0.3^\dag$& $80.7\pm0.1^\dag$& $80.6\pm0.3$& $81.5\pm2.0^\dag$&$>24h$\\
        \bottomrule
    \end{tabular}
    }
    \caption{10-fold cross-validation \(C\)-SVM classification accuracy (\%). \(\dag\) means the result is adopted from the original papers (the highest accuracy marked in \textbf{bold} and the second highest accuracy marked as \underline{underlined}).}
    \label{table:acc}
\end{table*}



The parameters of {\namemodel} are set as follows. As discussed in Theorem \ref{th:linearSeparationPropertyOfDHGAK}, we set the number of clusters for different hops \(h\) as the same, around the size of the graph dataset, i.e., \(|\mathcal{C}_\psi| = |\mathcal{D}|\times clusters\_factor\), where the \(clusters\_factor\) are searched as base-10 log scale. We fix the experiment times \(T\) to 3 and other parameters are grid-searched as shown in Table \ref{tab:paramSearch} via 10-fold cross-validation on the training data.

We compare {\namemodel} with 9 state-of-the-art graph kernels, i.e., PM \cite{PM}, WL \cite{WL-subtree_GK}, WWL \cite{WWL}, FWL \cite{GraphFiltrationKernels(FWL)}, RetGK \cite{RetGK}, WL-OA \cite{WL-OA}, GWL \cite{GWL}, GAWL \cite{GAWL}, and GraphQNTK \cite{GraphQNTK}. The parameters of these methods are set according to their original papers and implementations. We use the implementation of PM, WL, and WL-OA from GraKeL library \cite{graKeL}. All kernel matrices are normalized as \(K(i,j) = K(i,j)/\sqrt{K(i,i)K(j,j)}\). We use 10-fold cross-validation with a binary \(C\)-SVM \cite{libsvm} to test the classification performance of each graph kernel. The parameter \(C\) for each fold is independently tuned from \(\{10^{-3}, 10^{-2}, \ldots, 10^4\}\) using the training data from that fold. 
We report the average classification accuracy and standard deviation over the 10-fold. 

\begin{table}[!htbp]
    \small
    \centering
    \begin{tabular}{cc}
        \toprule
         parameter& search ranges\\
         \midrule
        (BERT) fine-tune epochs& $\{0,3\}$\\
         $b$& $\{0,1,2\}$ \\
         $H$& $\{1,3,\ldots,9\}$ \\
 $\alpha$&$\{0, 0.2, \ldots, 1\}$\\
         \(clusters\_factor\)& $\{0.1, 0.1\times r, \ldots,0.1\times r^8, 2\}$\\
         \bottomrule
    \end{tabular}

    \caption{Parameter grid ranges for datasets. \(r= 10^{(\frac{\text{log}_{10}2}{0.1})}\) represents the interval of base-10 log scale ranges. (BERT) represents the parameter only available for DHGAK-BERT.}
    \label{tab:paramSearch}
\end{table}

\subsection{Classification Results}

\subsubsection{Classification Accuracy}

The classification accuracy results of two realizations of DHGAK ({\namemodel}-BERT and {\namemodel}-w2v) are shown in Table \ref{table:acc}. 
On 16 datasets, both {\namemodel}-BERT and {\namemodel}-w2v show superior performance compared to all the competitors on 14 datasets except DD and NCI1. For small datasets (datasets above ENZYMES in Table \ref{table:acc}), DHGAK-BERT and DHGAK-w2v outperform all the baselines on all the datasets. For large datasets (datasets below PROTEINS in Table \ref{table:acc}), DHGAK-w2v shows a slight drop in classification accuracy and is comparable to GAWL; while DHGAK-BERT outperforms all baselines on all datasets except DD and NCI1 and has the second best performance on DD and NCI1. More specifically, DHGAK significantly outperforms some baselines on some datasets. For example, on DHFR, PTC\_MM, PTC\_FR, and ENZYMES, DHGAK exhibits absolute improvements of 1.8\%, 1.8\%, 3.2\%, and 4.9\%, respectively, over other methods.
Overall, DHGAK achieves the best performance on 14 of 16 datasets and the second best performance on DD and NCI1.

\subsubsection{Parameter Sensitivity}

In this section, we study the parameter sensitivity of DHGAK. The parameters are: the number of clusters \(|\mathcal{C_\psi}|=|\mathcal{D}|\times cluster\_factor\), the maximum hop \(H\), the slice width \(b\), decay weight \(\alpha \in [0, 1]\) and experiment times \(T\) (See \ref{sec:Tsensitivity}). We run {\namemodel}-BERT with 3 fine-tuned epochs on four datasets DHFR, PTC\_FR, KKI, and IMDB-B from different categories. 

\begin{figure}[!htbp]
    \hspace*{\fill}
    \centering
    \subfloat[KKI]{\includegraphics[width=.45\columnwidth]
    {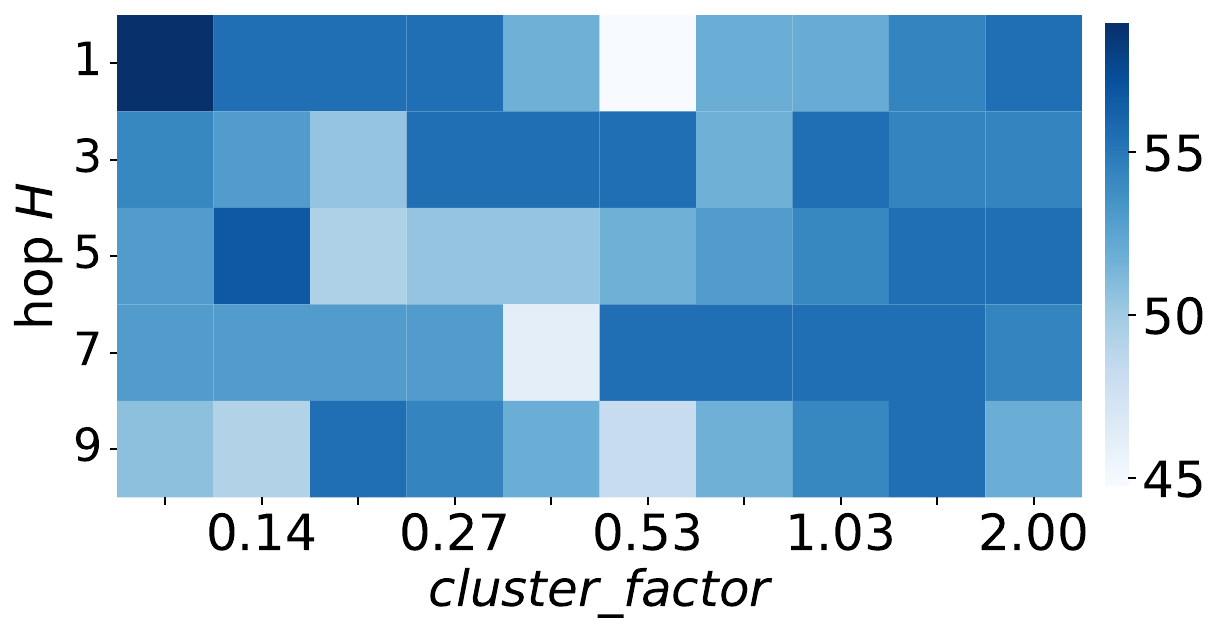}}
    \hfill
    \hfill
    \centering
    \subfloat[PTC\_FR]{\includegraphics[width=.45\columnwidth]{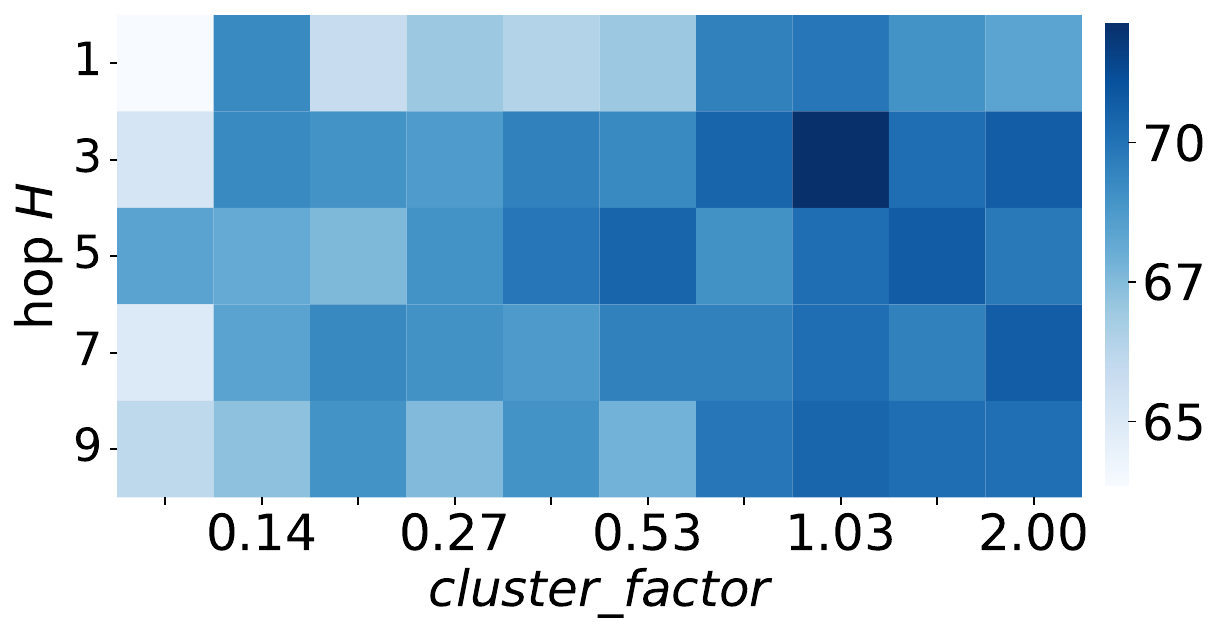}}
    \hspace*{\fill}

    \hspace*{\fill}
    \centering
    \subfloat[DHFR]{\includegraphics[width=.45\columnwidth]{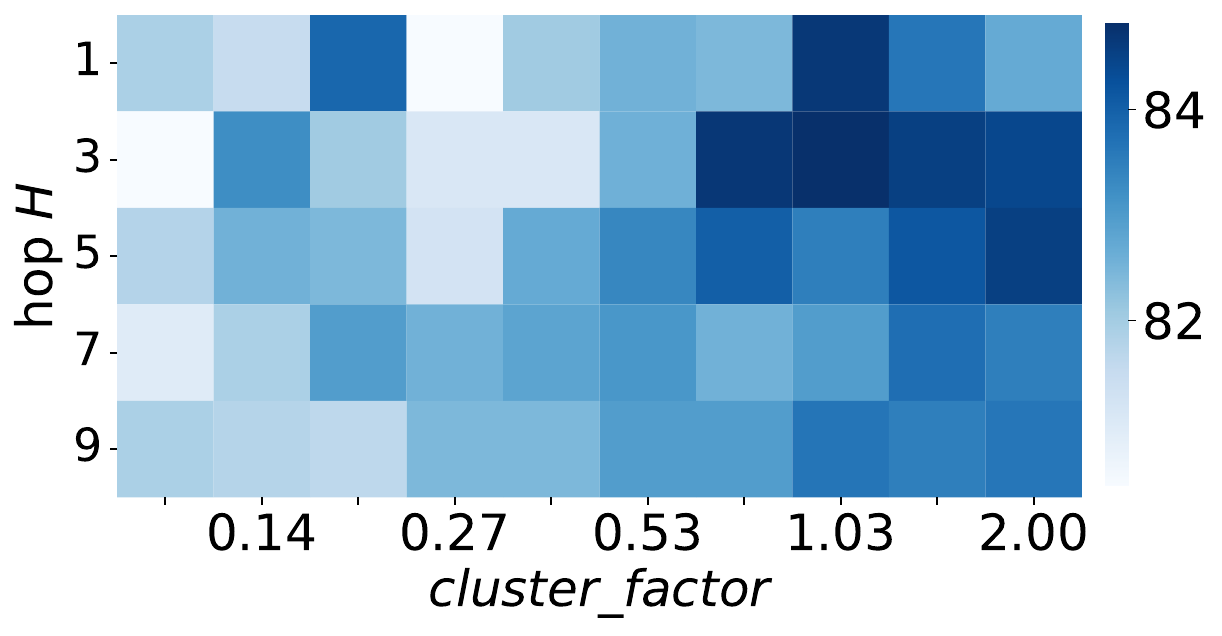}}
    \hfill
    \hfill
    \centering
    \subfloat[IMDB-B]{\includegraphics[width=.45\columnwidth]{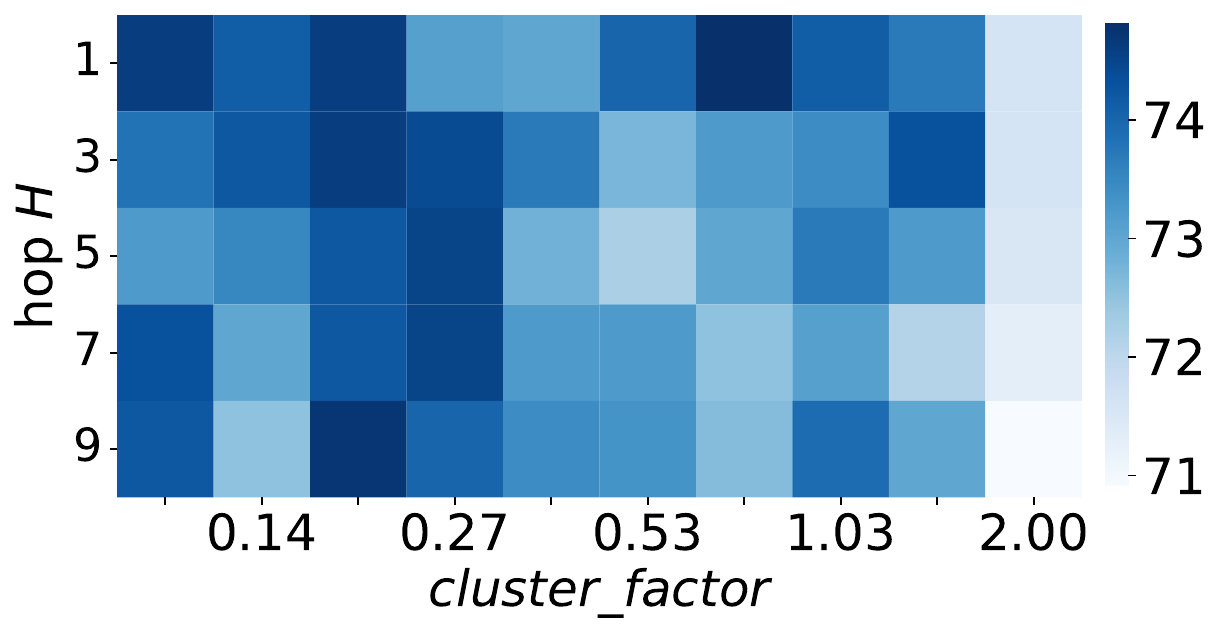}}
    \hspace*{\fill}
\caption{Paramter sensitivity analysis on \(H\) and \(cluster\_factor\) in {\namemodel}-BERT. We present the classification accuracy for combinations of \(H\in\{1,3,\ldots,9\}\) and \(cluster\_factor\in\{0.1, 0.14,\ldots,2\}\) selected from 0.1 to 2 in base 10 log scale.}
\label{fig:paramterSensitivityH_cluster}
\end{figure}

With \(b\) fixed to the best parameter in each dataset, and \(\alpha\) fixed to 0.6, we draw the heatmap of average classification accuracy over the 10 folds for all combinations of \(H \in \{1, 3, \ldots, 9\}\) and \(cluster\_factor\) varying from 0.1 to 2 in base 10 log scale. In Figure \ref{fig:paramterSensitivityH_cluster}, We can observe that {\namemodel}-BERT behaves stably across various parameter combinations, consistently achieving high performance.

Then we fix \(H\) and \(|\mathcal{C_\psi}|\) to the best parameter, and vary \(b\in \{0,1,2,3\}\) and \(\alpha \in \{0, 0.2, 0.4, 0.6, 0.8, 1\}\). Figure~\ref{fig:paramterSensitivity_ab} shows the heatmap for all combinations of \(b\) and \(\alpha\). As can be seen, the best classification accuracy may be achieved when using values from the set \(\{0, 1, 2\}\) for \(b\), with \(\alpha\) approximately around 0.6. 
For the IMDB-B dataset, when \(\alpha\) is fixed, increasing \(b\) beyond 2 does not improve the performance. The reason is that, for datasets with a large number of edges in the graph, the encoding of slice can easily exceed the input sentence length limit (the token size) of BERT, leading to a truncation of encoding. Exploiting Large Language Models (LLM) such as GPT4 is left for future work.

\begin{figure}
    \hspace*{\fill}
    \centering
    \subfloat[KKI]{\includegraphics[width=.45\columnwidth]{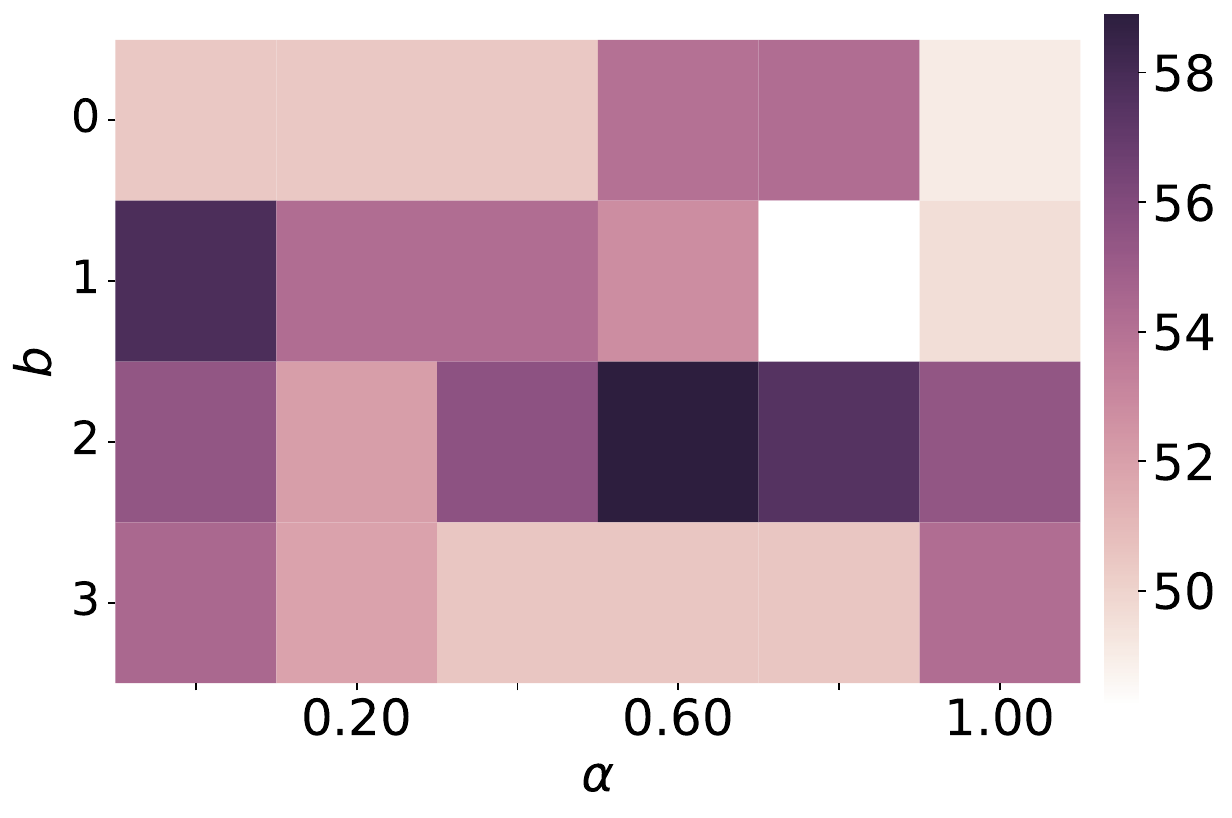}}
    \hfill
    \hfill
    \centering
    \subfloat[PTC\_FR]{\includegraphics[width=.45\columnwidth]{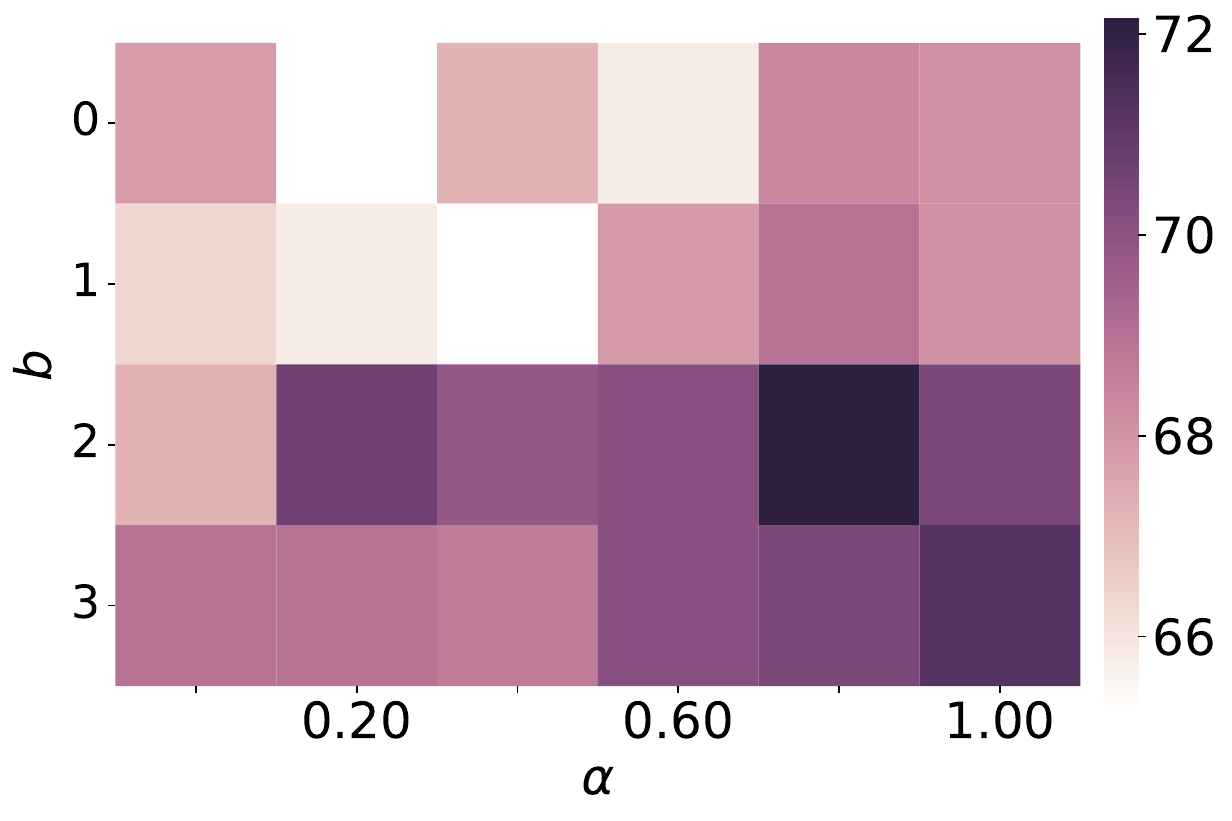}}
    \hspace*{\fill}

    \hspace*{\fill}
    \centering
    \subfloat[DHFR]{\includegraphics[width=.45\columnwidth]{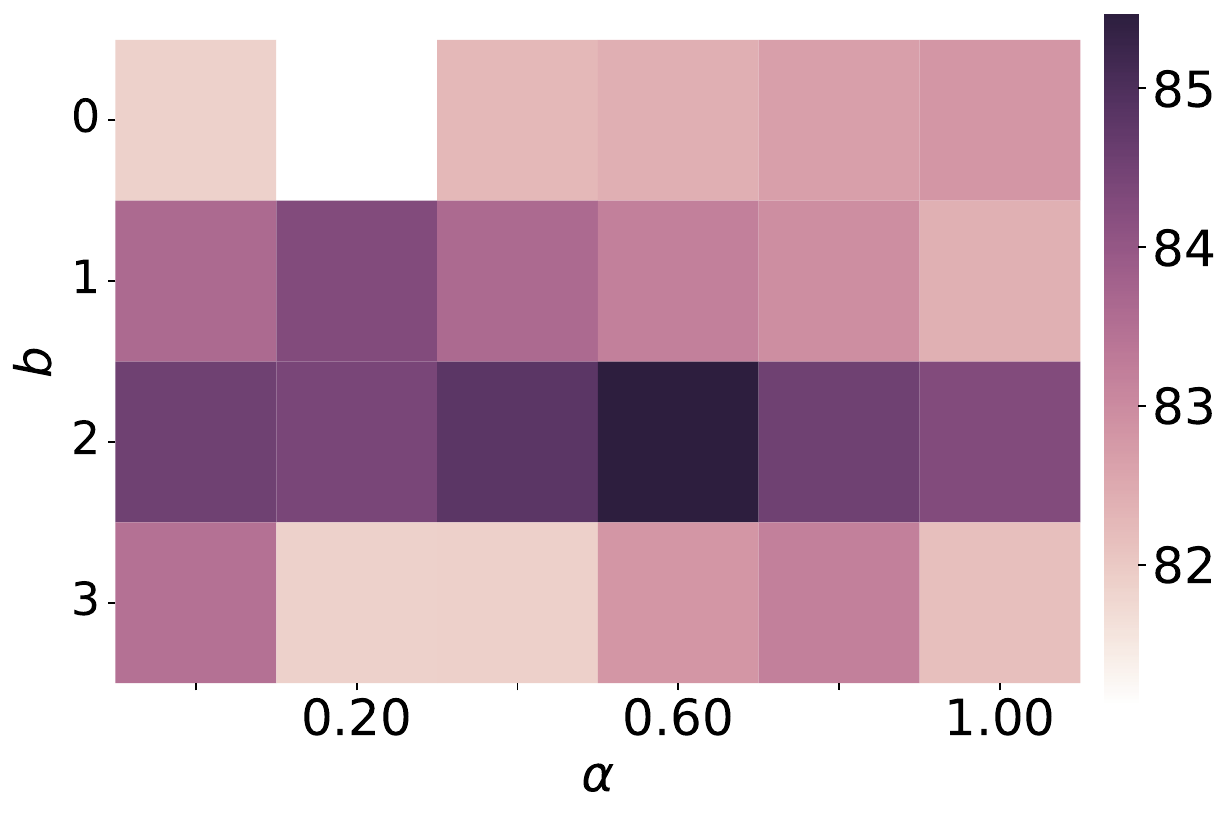}}
    \hfill
    \hfill
    \centering
    \subfloat[IMDB-B]{\includegraphics[width=.45\columnwidth]{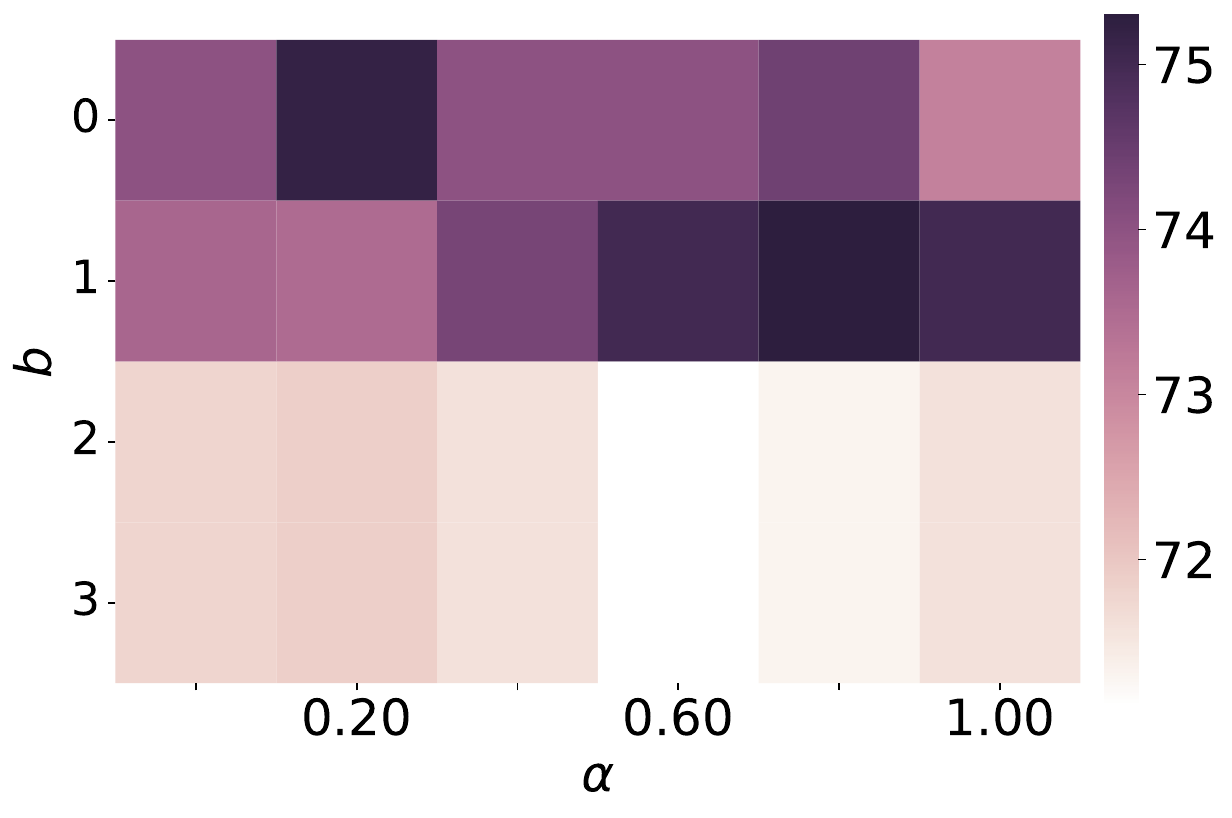}}
    \hspace*{\fill}
\caption{Parameter sensitivity analysis on \(b\) and \(\alpha\) in {\namemodel}-BERT. We present the classification accuracy for combinations of \(b\in\{0,1,2,3\}\) and \(\alpha\in\{0,0.2,0.4,\ldots,1.0\}\). }
\label{fig:paramterSensitivity_ab}
\end{figure}

\subsection{Ablation Study}

In this section, we compare {\namemodel}-BERT to its three variants: HGAK-label, DGAK-BERT, and DHW-BERT. HGAK-label is a graph kernel whose label embedding function \(g\) defined in Definition \ref{df:slice-emb} is one-hot node label mapping. DGAK-BERT is a graph kernel whose deep embeddings are learned from the nodes' (\(H+b\))-depth truncated BFS trees, not from hierarchical slices. DHW-BERT uses the 1-Wasserstein distance instead of using DAK and DGAK, whose kernel matrix is computed by summing the Laplacian kernel of 1-Wasserstein distances between graphs for each hop. The average classification accuracy via 10-fold cross-validation of them is given in Table \ref{tab:ablation}. 

The effectiveness of each component in DHGAK is demonstrated in the table. Comparisons between the performance of the three ablation variants show that HGAK-label ranks the first, indicating that the deep embedding captures the neighborhood information of slice better than direct one-hot encoding. DHW-BERT ranks the second, indicating that DAK has a better ability to capture the inherent connections of slices compared to the Wasserstein distance. Finally, DGAK-BERT exhibits the lowest performance, suggesting that the slice structure we introduced significantly contributes to the model performance of DHGAK compared to the common BFS trees.

\begin{table}
    \centering
    \resizebox{0.98\linewidth}{!}{
    \begin{tabular}{l|c|ccc}
        \toprule
         Dataset&  DHGAK-BERT&  HGAK-label&  DGAK-BERT&  DHW-BERT \\
         \midrule
         KKI&  $\mathbf{58.9\pm8.2}$ &  $55.4\pm5.2$&  $55.4\pm5.2 $& $49.3\pm12.2 $\\
         MUTAG&  $\mathbf{90.9\pm4.2}$ &  $88.2\pm4.6$&  $82.9\pm6.7$& $86.1\pm4.4$\\
         PTC\_MM&  $\mathbf{70.5\pm4.6}$&  $66.4\pm6.4$&  $67.2\pm4.9$& $66.2\pm5.7$\\
         PTC\_FR&  $\mathbf{72.1\pm5.8}$ &  $70.3\pm5.3$&  $66.1\pm5.9$& $67.8\pm3.6$\\
         DHFR&  $\mathbf{85.5\pm4.4}$ &  $82.2\pm4.9$&  $82.6\pm4.8$& $84.3\pm5.6$\\
         ENZYMES&  $\mathbf{65.3\pm7.9}$ &  $57.3\pm8.2$&  $58.8\pm9.0$& $60.3\pm5.2$\\
         IMDB-B& $\mathbf{75.3\pm2.7}$ & $74.7\pm4.0$& $70.6\pm1.2$&$73.3\pm2.2$\\
         IMDB-M& $\mathbf{52.1\pm2.4}$ & $51.7\pm2.3$& $50.7\pm3.9$&$51.6\pm2.8$\\
         COLLAB& $\mathbf{81.7\pm1.5}$ & $81.5\pm1.0$& $81.2\pm1.8$& $>24h$\\
         \midrule
         Avg. rank& 1.0& 2.6& 3.3&3.0\\
        \bottomrule
    \end{tabular}
    }
    \caption{Comparison of the average classification accuracy of DHGAK-BERT, HGAK-label, DGAK-BERT, DHW-BERT. Note that DHW-BERT applying Wasserstein distance instead of DAK fails to complete in 24 hours on large datasets. }
    \label{tab:ablation}
\end{table}

\subsection{Running Time}
In Table \ref{tab:runtime}, we demonstrate the running time (in seconds) of two realizations of {\namemodel} compared with some baselines on four datasets. The datasets are sorted by their sizes. As shown in the table, the fastest GKs are \(\mathcal{R}\)-convolution graph kernels WL and GAWL. The running time of DHGAK-w2v is comparable to PM. DHGAK-BERT is much slower than DHGAK-w2v, but it is still faster than WWL, WL-OA, FWL, and GraphQNTK on large datasets such as COLLAB. 
We also find that the efficiency of DHGAK behaves better on large datasets. 
It is evident that time bottleneck of DHGAK-BERT lies in the inference process of BERT.

\begin{table}
    \centering
    \resizebox{0.98\linewidth}{!}{
    \begin{tabular}{l|cccc}
    \toprule
    Method&  PTC\_MM&  PROTEINS&  DD&  COLLAB\\
    \midrule
    DHGAK-BERT&  11.58&  754.09&  5456.99&  2452.41\\
    DHGAK-w2v&  0.91&  130.81&  2708.16&  1986.64\\
    PM&  2.38&  23.62&  1068.34&  1262.27\\
    WL& 0.74& 1.52& 14.81&82.10\\
    WWL&  13.63&  238.67&  4953.69&  10044.34\\
    FWL& 7.34& 960.94& 2531.51& 7405.03\\
    WL-OA& 1.70& 174.83& 6090.02& 30743.83\\
    GAWL& 2.02& 31.12& 519.69& 3854.78\\
    GraphQNTK& 27.47& 773.25& 6434.50& $>$24h\\
    \bottomrule
    \end{tabular}
    }
    \caption{Running time (in seconds) comparison between two realizations of {\namemodel} and some baselines. DHGAK-w2v is comparable to PM. DHGAK-BERT is faster than WWL, WL-OA, FWL, and GraphQNTK on large datasets.}
    \label{tab:runtime}
\end{table}


\section{Conclusion}

In this paper, we introduce Deep Hierarchical Graph Alignment Kernels (DHGAK), a novel framework to address the prevalent issue of neglecting relations among substructures in existing $\mathcal{R}$-convolution graph kernels. We propose a positive semi-definite kernel called Deep Alignment Kernel (DAK), which efficiently aligns relational graph substructures using clustering methods on the deep embeddings of substructures learned by Natural Language Models. Then, the Deep Graph Alignment Kernel (DGAK) is constructed by applying kernel mean embedding on the feature maps of DAK. Finally, DHGAK is constructed by the summation of DGAK on slices of different hierarchies. The theoretical analysis demonstrates the effectiveness of our approach, and experimental results indicate that the two realizations of DHGAK outperform the state-of-the-art graph kernels on most datasets.


\section*{Acknowledgments}
We thank the anonymous reviewers for their valuable and constructive comments. This work was supported partially by the National Natural Science Foundation of China (grants \# 62176184 and 62206108), the National Key Research and Development Program of China (grant \# 2020AAA0108100), and the Fundamental Research Funds for the Central Universities of China.


\bibliographystyle{named}
\bibliography{ijcai24}

\clearpage

\clearpage
\appendix
\section{Proofs of Theorems}

\subsection{Proof of Theorem \ref{th:semi-pos-DAK}}
\label{proof:DAK_pos}
\begin{proof}
    For the Deep Alignment Kernel (DAK), the \(\kappa_h^b(\cdot, \cdot)\) is symmetric. For any point \(\mathbf{v}_1, \mathbf{v}_2,\ldots, \mathbf{v}_N \in \mathbf{X}_h^b \) and any vector \(\mathbf{x} \in \mathbb{R}^N\), we denote \(\mathbf{K}\) as the Gram matrix of \(\mathbf{v}_1, \mathbf{v}_2,\ldots, \mathbf{v}_N\): \(\mathbf{K} = [\kappa_h^b(\mathbf{v}_i, \mathbf{v}_j)]_{N\times N}\). We have:
    \begin{equation}
        \begin{split}
            \mathbf{x}^T \mathbf{K} \mathbf {x} &= \sum_{i,j =1}^{N} x_i x_j \kappa_h^b(\mathbf{v}_i, \mathbf{v}_j) \\
            &= \sum_{i,j =1}^{N} x_i x_j 
            \mathbb{E}_{\psi \sim U(\Psi)}[\mathbb{I}(\mathbf{v}_i,\mathbf{v}_j\in \mathcal{C}|\mathcal{C}\in \mathcal{C}_\psi)] \\
            &= \sum_{i,j =1}^{N} x_i x_j \mathbb{E}_{\psi \sim U(\Psi)}[
            \sum_{k=1}^{| \mathcal{C}_\psi |} \mathbb{I}(\mathbf{v}_i\in \mathcal{C}_{\psi,k} ) \mathbb{I}(\mathbf{v}_j\in \mathcal{C}_{\psi,k} )] \\
            &= \mathbb{E}_{\psi \sim P(\Psi)}[ \sum_{k=1}^{| \mathcal{C}_\psi |} 
            \sum_{i=1}^{N} x_i \mathbb{I}(\mathbf{v}_i\in \mathcal{C}_{\psi,k} ) \sum_{j=1}^{N} x_j \mathbb{I}(\mathbf{v}_j\in \mathcal{C}_{\psi,k} )] \\
            & = \mathbb{E}_{\psi \sim P(\Psi)}[ \sum_{k=1}^{| \mathcal{C}_\psi |} 
            \Vert \sum_{i=1}^{N} x_i \mathbb{I}(\mathbf{v}_i\in \mathcal{C}_{\psi,k} ) \Vert ^ 2
            ] \geq 0
        \end{split}
    \end{equation}
    Thus, the original DAK is positive semi-definite. For the approximation of DAK:
   \begin{equation}
         \kappa_h^b(\mathbf{x}, \mathbf{y}) \simeq \frac{1}{T | \Psi | } 
        \sum_{\psi \in \Psi} \sum_{t=1}^{T} 
        \sum_{i=1}^{|\mathcal{C}_\psi |} \mathbb{I}(\mathbf{x}\in \mathcal{C}^{(t)}_{\psi,i}) \mathbb{I}(\mathbf{y}\in \mathcal{C}^{(t)}_{\psi,i})
\end{equation}
    One can similarly prove that it is also positive semi-definite.
\end{proof}
   
\subsection{Proof for Theorem \ref{th:transitive}}
\label{proof:transitive}    \begin{proof}
        By the definition of alignment operation \(\mathcal{M}_{h,\psi}^{b,t}\), if \((\mathbf{x}, \mathbf{y})\) are aligned and \((\mathbf{y}, \mathbf{z})\) are also aligned, which means \(\mathbf{x}\), \(\mathbf{y}\) and \(\mathbf{z}\) belong to the same cluster, i.e., \(\mathbf{x}, \mathbf{y}\) and \(\mathbf{z}\) are all aligned. Thus \(\mathcal{M}_{h,\psi}^{b,t}(\mathbf{x}, \mathbf{z})= \mathcal{M}_{h,\psi}^{b,t}(\mathbf{x}, \mathbf{y}) \mathcal{M}_{h,\psi}^{b,t}(\mathbf{y}, \mathbf{z})\). 
\end{proof}
    

\subsection{Proof for Linear Separation Property of {\namemodel}}
\label{proof:LinearSeparationPropertyOfDHGAK}

Given a graph dataset \(\mathcal{D}=\{(G_i, y_i)\}_{i=1}^{N}\), let \(\Psi_o\) denote the set of clustering methods \(\{\psi_{j}\}_{j=1}^{|\Psi_o|}\), where \(\psi_j: \mathbf{X}_h^b \to \mathcal{C}_{\psi_{j}}\) separates each point \(\mathbf{x}_h^b\) (the embedding of a slice extracted around the node $v$) to the cluster \(\mathcal{C}_{\psi_j, k}\) if and only if \(v \in V_k\) belonging to \(G_k\). First, we give a Lemma as follows:

\begin{lemma}
    Consider unit vectors \(\{\mathbf{e}_i\}_{i=1}^N \) in space \(\mathbb{R}^N\), we arbitrarily assign labels \(\mathcal{Y}=\{+1, -1\}\) to them to construct datasets \(\{(\mathbf{e}_i, y_i)\}_{i=1}^N\). There exists a family of hyper-planes in \(\mathbb{R}^N\) that separates samples with label \(+1\) from samples with label \(-1\).
    \label{lemma1}
\end{lemma}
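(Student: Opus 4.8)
The plan is to exhibit one explicit separating hyperplane and then argue that it survives small perturbations, which yields the whole family. The only structural fact I will use about $\{\mathbf{e}_i\}_{i=1}^N$ is that they are the canonical orthonormal basis of $\mathbb{R}^N$, hence mutually orthogonal; this orthogonality is precisely what makes every labeling separable, and it is the crux on which the elementary argument turns.

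First I would set $\mathbf{w} = \sum_{i=1}^N y_i\,\mathbf{e}_i$, the vector whose $i$-th coordinate equals the assigned label $y_i \in \{+1,-1\}$. By orthonormality, $\langle \mathbf{w}, \mathbf{e}_j\rangle = \sum_{i=1}^N y_i \langle \mathbf{e}_i, \mathbf{e}_j\rangle = y_j$ for each $j$. Taking the hyperplane through the origin with normal $\mathbf{w}$ (zero bias), the functional $\mathbf{x} \mapsto \langle \mathbf{w}, \mathbf{x}\rangle$ equals $+1$ on every point labeled $+1$ and $-1$ on every point labeled $-1$, so $\operatorname{sign}\langle \mathbf{w}, \mathbf{e}_i\rangle = y_i$ for all $i$ and the two classes are strictly separated.

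To upgrade this single hyperplane to a family, I would observe that each margin $y_i \langle \mathbf{w}, \mathbf{e}_i\rangle = 1$ is strictly positive and bounded away from zero. By continuity of the inner product, every pair $(\mathbf{w}', b')$ in a sufficiently small neighborhood of $(\mathbf{w}, 0)$ still satisfies $y_i\big(\langle \mathbf{w}', \mathbf{e}_i\rangle - b'\big) > 0$ for all $i$; thus an open set of hyperplanes separates the data, establishing the ``family'' claim.

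I expect no deep obstacle: the result is elementary once the orthogonality of the basis is invoked. The one genuine subtlety is the reading of ``unit vectors'': for arbitrary unit vectors the statement is false (two coincident vectors carrying opposite labels cannot be separated), so the argument must rely on the $\mathbf{e}_i$ being the standard, and in particular linearly independent, basis. Connecting the lemma back to Theorem \ref{th:linearSeparationPropertyOfDHGAK} then only requires noting that the clustering family $\Psi_o$ sends every slice of graph $G_k$ into the single cluster indexed by $k$, so that the kernel mean embedding of $G_k$ is, up to normalization, proportional to $\mathbf{e}_k$, reducing DHGAK's linear separability to exactly this lemma.
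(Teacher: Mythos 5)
Your proof is correct, and it takes a genuinely different and simpler route than the paper. The paper fixes the SVM-style criterion $y_i(\mathbf{w}^T\mathbf{e}_i + b) > 0$ and then constructs an explicit separator via a determinant: $f^\star(\mathbf{x}) = {\mathbf{w}^\star}^T\mathbf{x} + b^\star$ with $w^\star_i = \prod_{j\neq i} c_j$, $b^\star = -\prod_j c_j$, and $c_i = 1 \pm \operatorname{sign}(y_i)\beta_i$; the ``family'' of hyperplanes there is parameterized by the free scalars $\beta_i > 0$, since $y_k f^\star(\mathbf{e}_k) = y_k(1-c_k)\prod_{j\neq k}c_j$ stays positive for all admissible $\beta_i$. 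You instead take the canonical choice $\mathbf{w} = \sum_i y_i\mathbf{e}_i$, $b=0$, which by orthonormality yields uniform margin $y_i\langle\mathbf{w},\mathbf{e}_i\rangle = 1$, and you obtain the family from openness of the finitely many strict inequalities under perturbation of $(\mathbf{w},b)$. Your construction buys simplicity and robustness: it avoids the paper's parity-dependent bookkeeping for the $c_i$ (which in fact contains a sign slip --- for even $N$ with $c_k = 1+\operatorname{sign}(y_k)\beta_k$ one gets $y_kf^\star(\mathbf{e}_k) = -\beta_k\prod_{j\neq k}c_j < 0$, so that choice separates with the sides swapped and strictly violates the stated criterion unless one negates $f^\star$), and it plugs into the downstream argument of Theorem \ref{th:linearSeparationPropertyOfDHGAK} just as well, since with $\hat{\mathbf{w}} = \mathbf{w}\otimes\mathbf{1}$ and $\hat{b}=0$ one gets $y_k\hat{\mathbf{w}}^T\hat{\Phi}(G_k) = H\sqrt{T|\Psi_o|} > 0$. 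What the paper's parameterized construction buys in exchange is an explicit, non-perturbative description of infinitely many separators, but nothing in the theorem requires that. Your two side remarks are also apt: the lemma is false for arbitrary unit vectors (the $\mathbf{e}_i$ must be the standard basis, as the paper's own usage confirms), and the reduction of the theorem to the lemma is exactly as you describe, up to the Kronecker factor $\mathbf{1}_{(TH|\Psi_o|)\times 1}$ that you absorb into ``up to normalization.''
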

\begin{proof}
    First, we define the linear separation as the same as \cite{SVM}. Let \(f(\mathbf{x}) = 0\) be a hyperplane in \(\mathbb{R}^N\), where \(f(\mathbf{x}) = \mathbf{w}^T\mathbf{x} + b\). The set of datasets \(\mathcal{D}=\{(\mathbf{e}_i, y_i)\}_{i=1}^N\) and \(y_i\in\{+1,-1\}\), is said to be linearly separable if there exists a vector \(\mathbf{w}\) and a scalar \(b\) satisfying:
    
    \begin{equation}
        y_i (\mathbf{w}^T \mathbf{e}_i + b) > 0, \ \ i=1,\ldots,N
    \end{equation}
    Then we consider the function defined as follows:
    \begin{equation}
    \begin{array}{cc}
         & f^\star(\mathbf{x}) = 
            \begin{vmatrix}
            x_1 - c_1 & x_2 & \cdots & x_N \\
                - c_1 & c_2 & \cdots & 0 \\
                \vdots & \vdots & \ddots & \vdots \\
                - c_1 & 0 & \cdots & c_N \\
            \end{vmatrix} \\
            & = \begin{bmatrix}
             c_2c_3\cdots c_N\\
             c_1c_3\cdots c_N\\
                \vdots \\
             c_1c_2\cdots c_{N-1}
            \end{bmatrix} ^ T
            \begin{bmatrix}
             x_1\\
             x_2\\
                \vdots \\
             x_N
            \end{bmatrix}
            - 
            c_1c_2\cdots c_N =: {\mathbf{w}^\star}^T \mathbf{x} + b ^\star
    \end{array}
\end{equation}
    Here \(\mathbf{x} = [x_1, \cdots, x_N]^T\) and
    \[
    c_i =
    \begin{cases}
      1 + \text{sign}(y_i) \beta_i  &\text{if mod}(N,2)=0 \\
      1 - \text{sign}(y_i) \beta_i & \text{if mod}(N,2)=1 \\
    \end{cases}
    \]
    for any \(\beta_i>0\) that makes \(c_i > 0\).
    The \(f^\star (\mathbf{x})\) can linearly separate dataset \(\mathcal{D}=\{(\mathbf{e}_i, y_i)\}_{i=1}^N\).
    
\end{proof}

We take \(N=3\) for an example, 
\[
f^\star(\mathbf{x})= c_2c_3x_1 + c_1c_3x_2 + c_1c_2x_3 - c_1c_2c_3
\]
For \(i=1\), we have
\[
\begin{split}
y_1f(\mathbf{e_1}) &= y_1[c_2c_3 - c_1c_2c_3] \\
&= y_1[(1-(1-\text{sign}(y_1)\beta_1)) c_2c_3] \\
&= \text{sign}^2(y_1)c_2c_3 \beta_1 |y_1| > 0
\end{split}
\]
Similarly, for \(i = 2\) and \(3\), we obtain similar inequalities.

Then we give the proof of Theorem \ref{th:linearSeparationPropertyOfDHGAK}.

\begin{proof}
    Consider a graph label set with only two classes \(\mathcal{Y} \in \{+1, -1\}\), as we focus on binary classification. This simplification is made, even when dealing with multi-class \(\mathcal{Y}\), by treating one class against all others when referring to linear separability. The feature map of DHGAK, defined by Equation \ref{eq:DHGAK}, is derived as follows:
    \begin{equation}
    \begin{split}
         &\mathcal{K}(G_1,G_2) =\sum_{h=1}^H\mathcal{K}_h^b(G_1,G_2)  \\
         &= \sum_{h=1}^H \langle \frac{1}{|\mathbf{X}_h^b|}\sum_{\mathbf{x}\in\mathbf{X}_h^b}\phi_h^b(\mathbf{x}), \frac{1}{|\mathbf{Y}_h^b|}\sum_{\mathbf{y}\in\mathbf{Y}_h^b}\phi_h^b(\mathbf{y}) \rangle \\
         &=\langle \parallel_{h=1}^H \frac{1}{|\mathbf{X}_h^b|}\sum_{\mathbf{x}\in\mathbf{X}_h^b}\phi_h^b(\mathbf{x}), \parallel_{h=1}^H \frac{1}{|\mathbf{Y}_h^b|}\sum_{\mathbf{y}\in\mathbf{Y}_h^b}\phi_h^b(\mathbf{y}) \rangle\\ 
         &=: \langle \hat{\Phi}(G_1), \hat{\Phi}(G_2) \rangle\\
    \end{split}
    \end{equation}
     Because each \(\psi_{j} \in \Psi_o\) separates each point \(\mathbf{x}\in \mathbf{X}_h^b\) (the embedding of the slice extracted from the node $v$) to the cluster \(\mathcal{C}_{\psi_h, k}\) if and only if \(v \in V_k\) belonging to \(G_k\), the cluster indicator at the \(t\)-th experiment under \(\psi_j\) is presented as:
    \[
    \begin{split}
    \mathbbm{1}^{(t)}_{\psi_j}(\mathbf{x}) &= [\overbrace{0,\ldots , 1,\ldots,0}^{N}]^T \\
    & \phantom{...............}\uparrow \text{index} \ k: v\in V_k \text{ of } G_k
    \end{split}
    \]
    
    The explicit form of the feature map of DHGAK under \(\Psi_o\) is derived as follows:
    \[
    \begin{split}
        \hat{\Phi}(G_k) &= \frac{1}{\sqrt{T| \Psi_o |}}   \parallel_{h=1}^H \sum_{\mathbf{x}\in\mathbf{X}_h^b} \frac{1}{|\mathbf{X}_h^b|} \parallel_{\psi \in \Psi_o}\parallel_{t=1}^T\mathbbm{1}^{(t)}_\psi(\mathbf{x})  \\
        & = \frac{1}{\sqrt{T| \Psi_o |}}  \parallel_{h=1}^H \parallel_{\psi \in \Psi_o} \parallel_{t=1}^T [\overbrace{0,\ldots , 1,\ldots,0}^{N}]^T \\
        & \phantom{...................................................}\uparrow \text{index} \ k \\
        &= \frac{1}{\sqrt{T| \Psi_o |}} \mathbf{e}_k \otimes \mathbf{1}_{(TH|\Psi_o|)\times1} \in \mathbb{R}^{NTH|\Psi_o|}
    \end{split}
    \]
    Here, \(\mathbf{e}_k\) represents the \(k\)-th unit vector in \(\mathbb{R}^N\),  \(\mathbf{1}_{(TH|\Psi_o|)\times1}\) denotes the full-one vector and \(\otimes\) represents Kronecker product. 
    
    By Lemma \ref{lemma1},  we have \(y_kf^\star (\mathbf{e}_k) > 0\) for \(k=1,2,\ldots, N\). Let \(\hat{\mathbf{w}} = \mathbf{w}^\star \otimes \mathbf{1}_{(TH|\Psi_o|)\times1} \), \(\hat{b}=TH|\Psi_o| b^\star\) and \(\hat{f}(\mathbf{z}) = \hat{\mathbf{w}}^T \mathbf{z} + \hat{b} \) for \(\mathbf{z}\in\mathbb{R}^{NTH|\Psi_o|}\), then we have:
    \begin{equation}
    \begin{split}
        &y_k\hat{f}(\hat{\Phi}(G_k)) = \frac{1}{\sqrt{T| \Psi_o |}} (\mathbf{w}^\star \otimes \mathbf{1}_{(TH|\Psi_o|)\times1})^T \\
        &(\mathbf{e}_k \otimes \mathbf{1}_{(TH|\Psi_o|)\times1}) + TH|\Psi_o| b^\star \\
        &= \frac{1}{\sqrt{T| \Psi_o |}}({\mathbf{w}^\star}^T \mathbf{e}_k \otimes \mathbf{1}_{(TH|\Psi_o|)\times1}^T \mathbf{1}_{(TH|\Psi_o|)\times1} ) + TH|\Psi_o| b^\star \\
        &= H\sqrt{T|\Psi_o|}({\mathbf{w}^\star}^T \mathbf{\mathbf{e}}_k + b ^\star) > 0
    \end{split}
    \end{equation}

Hence, the hyperplanes \(\hat{f}(\mathbf{z}) = \hat{\mathbf{w}}^T\mathbf{z} + \hat{b} = 0\) with \(\beta_i > 0, i=1,\ldots,N\), can linearly separate the feature maps of DHGAK, denoted as \(\hat{\Phi}(G)\), for graph labels \(+1\) and \(-1\). Due to the randomness of the labels represented by \(+1\), there exists a family of hyperplanes \(\{\hat{f}_1, \ldots, \hat{f}_{|\mathcal{Y}|}\}\) that make the feature map of DHGAK on a graph linearly separable from those of other graphs with different labels.

\end{proof}

\section{Pseudo-codes}
\label{psucodes}

\subsection{Pseudo-code for the Encoding of Slice}

The pseudo-code for the encoding of slice is shown in Algorithm \ref{alg:slice_encoding}. For a special case, at line 4, if two nodes have the same eigenvector centralities, we sort them by their label values and node indices.


\label{psucode:EncodingSlice}
\begin{algorithm}[!htbp]
\caption{The encoding of slice}
\label{alg:slice_encoding}
\floatname{algorithm}{Procedure}
\begin{algorithmic}[1]
\renewcommand{\algorithmicrequire}{\textbf{Input:}}
\renewcommand{\algorithmicensure}{\textbf{Output:}}
\REQUIRE Graph \(G=(\mathcal{V},\mathcal{E},l)\), node \(v\in \mathcal{V}\), hop \(h\) and width $b$.
\ENSURE Slice encoding $S_h^b(v)$.

\STATE \(EC \gets \) GetEigenCentrality(G); /* Get the Eigenvector Centrality */
\STATE \(S_h^b(v) \gets []\);
\STATE \(N_h(v) \gets\) GetBFSLeaves(\(G,v,h\));
\STATE \(N_h(v) \gets N_h(v)\).sort(key=lambda \(x: EC(x)\));        /* sorting $N_h(v)$ by eigenvector centrality */

\FORALL{ \(u \in N_h(v)\)}
    \STATE \(S^b(u) \gets []\); 
    \FOR{\(i \gets 0\) \TO \(b\)}
        \STATE \(N_i(u) \gets\) GetBFSLeaves(\(G,u,i\));
        \STATE \(N_i(u) \gets N_i(u)\).sort(key=lambda \(x: EC(x)\));
        \STATE \(S^b(u)\).extend(\(N_i(u)\));
    \ENDFOR
    \STATE \(S_h^b(v)\).extend(\(S^b(u)\));
\ENDFOR

\RETURN $l(S_h^b(v))$;
\end{algorithmic}
\end{algorithm}

\subsection{Pseudo-code for DHGAK}

The pseudo-code for DHGAK is shown in Algorithm \ref{alg:DHGAK}.

\label{psucode:DHGAK}
\begin{algorithm}[!htbp]
    \caption{Generate kernel matrix of DHGAK}
\label{alg:DHGAK}
\floatname{algorithm}{Procedure}
    \begin{algorithmic}[1]
    \renewcommand{\algorithmicrequire}{\textbf{Input:}}
    \renewcommand{\algorithmicensure}{\textbf{Output:}}
    \REQUIRE  Undirected labeled graph dataset \(\mathcal{D}=\{G_i=(\mathcal{V}_i, \mathcal{E}_i)\}_{i=1}^{N}\), the maximum hop \(H\) and width \(b\) for hierarchical slices, label embedding function (learned by NLMs) \(g\), decay factor \(\alpha\), the set of clustering method \(\Psi\) and experiment time \(T\).
    \ENSURE The kernel matrix of DHGAK $\mathbf{K} \in \mathbb{R}^{N\times N}$.

    \STATE /* Get the deep embeddings of hierarchical slices */ 
    \STATE \(\mathbf{X} \gets []\);
    \FOR{ \(h \gets 0\) \TO \(H\)}
        \STATE \(\mathbf{X}_h^b\) \(\gets []\);
        \FORALL{ \(G=(\mathcal{V}, \mathcal{E}) \in \mathcal{D}\)}
            \STATE \(\mathbf{x}_h^b\gets []\);
            \FORALL{\(v\in V\)}
                \STATE /* Get the slice encoding by Algoirthm \ref{alg:slice_encoding} */
                \STATE \(S_h^b(v) \gets\) SliceEncoding(\(G, v, h, b\));
                \STATE \(\mathbf{x}_h^b \text{.append(} \sum_{s\in S_h^b} g (s)\));
            \ENDFOR
            \STATE \(\mathbf{X}_h^b \text{.expand(} \mathbf{x}_h^b\));  /* Concatenate \(x^b_h\) for all graphs */
        \ENDFOR
        \STATE \(\mathbf{X}.\text{append(}\mathbf{X}_h^b\));
    \ENDFOR
    \FOR{\(h\gets 1\) \TO H}
        \STATE \(\mathbf{X}[h] \gets \alpha\mathbf{X}[h-1] + \mathbf{X}[h]\)
    \ENDFOR    
    \STATE /* Compute the kernel matrix of DHGAK */
    \STATE \(\mathbf{K} \gets \text{zeros}(N, N)\);
    \FOR{\(h\gets 0\) \TO \(H\)}
        \STATE \(\phi_h^b \gets []\), \(\hat{\Phi}_h^b\gets []\);
        \FORALL{\(\psi\in\Psi\)}
            \FOR{ \(t\gets 1\) \TO \(T\)}
                \STATE \(\mathcal{C}_\psi^{(t)} \gets \psi(\mathbf{X}[h])\);
                \STATE \(\mathbbm{1}_\psi^{(t)} \gets\) GetClusterIndictors(\(\mathcal{C}_\psi^{(t)}\));
                \STATE \(\phi_h^b \text{.expand(} \frac{1}{\sqrt{T|\Psi|}} \mathbbm{1}_\psi^{(t)}\));
            \ENDFOR
        \ENDFOR
        \STATE /* Using kernel mean embedding */
        \FORALL{\(G=(\mathcal{V}, \mathcal{E}) \in \mathcal{D}\)}
            \STATE \(\Phi \gets \text{zeros}(|\mathcal{V}|, 1)\);
            \FORALL{\(v\in \mathcal{V}\)}
                \STATE \(\Phi \gets \Phi + \frac{1}{|V|}\phi_h^b(\mathbf{x}_h^b(v))\);
            \ENDFOR
            \STATE \(\hat{\Phi}_h^b \text{.append(} \Phi\)); 
        \ENDFOR
        \STATE \(\mathbf{K} \gets \mathbf{K} + \langle\hat{\Phi}_h^b, \hat{\Phi}_h^b \rangle\);
    \ENDFOR

    \RETURN \(\mathbf{K} \);
    \end{algorithmic}
\end{algorithm}

\subsection{Time Complexity Analysis }

\label{sec:TimeComplexityAnalysis}

In Algorithm~\ref{alg:DHGAK}, lines 1-18 generate the deep embeddings of slices.
The time complexity of constructing a BFS tree is bounded by $\mathcal{O}(n+e)$, where $n$ and $e$ are the average number of nodes and edges in the graph, respectively. We construct the truncated BFS tree rooted at each node in a graph to a depth of $max(h,b)$ and store them. Note that the BFS tree contains at most $n$ nodes. Therefore, the time complexity of encoding the slice for a graph is $\mathcal{O}(n(n+e) + n\cdot n \cdot \mathcal{O}(1))$. The first term represents the time complexity of constructing truncated BFS trees to a depth of $max(h,b)$ for every node in the graph, and the second term represents the time complexity of retrieving $b$-depth truncated BFS trees from the stored trees for each leaf node of the $h$-depth BFS trees.
The time complexity of computing Eigenvector Centrality (EC) is approximately $\mathcal{O}(n^2)$, and EC only needs to be computed once for each graph. Therefore, the time complexity of generating the deep embeddings is bounded by $\mathcal{O}(HN(n^2 + (n(n+e) + n^2)\mathcal{T}_{NLM})) \simeq \mathcal{O}(HN(n^2+ne)\mathcal{T}_{NLM})$, where $H$ is the maximum hop, $N$ is the size of the dataset, and $\mathcal{T}_{NLM}$ is the time complexity of the selected NLM for inference per token.

Lines 19-29 compute the feature maps of DAKs by applying each $\psi \in \Psi$ hierarchically on the deep embeddings. The time complexity of this process is approximate $\mathcal{O}(HT|\Psi|\mathcal{T}_C(Nn))$, where $T$ is the number of experiments for each clustering method, and $\mathcal{T}_C(Nn)$ represents the average time complexity of clustering methods.

Lines 30-40 calculate the kernel matrix of DHGAK using kernel mean embedding, with a time complexity of $\mathcal{O}(HNn)$. Consequently, the worst-case time complexity of DHGAK is bounded by $\mathcal{O}(HN(n^2 + ne)\mathcal{T}_{NLM} + HT|\Psi|\mathcal{T}_C(Nn) + HNn) \simeq \mathcal{O}(HN(n^2 + ne)\mathcal{T}_{NLM} + HT|\Psi|\mathcal{T}_C(Nn))$.

\section{Details of Experiments}

\subsection{Detail of Datasets}

The datasets we used in the experiments are from four categories: chemical compound datasets, molecular compound datasets, brain network datasets, and social network datasets. 

The chemical compound datasets include MUTAG \cite{MUTAG}, BZR \cite{BZR_COX2_DHFR}, COX2 \cite{BZR_COX2_DHFR}, DHFR \cite{BZR_COX2_DHFR}, and NCI1 \cite{NCI1_NCI109}.
The Molecular compound datasets include PTC \cite{PTC} (comprising male mice (MM), male rats (MR), female mice (FM), and female rats (FR)), ENZYMES \cite{ENZYMES_PROTEINS}, PROTEINS \cite{ENZYMES_PROTEINS}, and DD \cite{DD}. 
The brain network dataset includes KKI \cite{KKI}. 
The Social network datasets \cite{DGKandIBDM_Reddit} contain IMDB-B, IMDB-M, and COLLAB. The statistics of these datasets are given in Table \ref{table:datasets}.

\begin{table}[!htbp]
    \centering
    \resizebox{0.45\textwidth}{!}{%
    \begin{tabular}{lccccc} 
    \toprule
        \thead{Dataset} & \thead{Size} & \thead{Class\#} & \thead{Avg. \\ Node\#} & \thead{Avg. \\Edge\#} & \thead{Node\\ Label \#} \\ 
        \midrule
        KKI & 83 & 2 & 26.96 & 48.42 & 190 \\ 
        MUTAG & 188 & 2 & 17.93 & 19.79 & 7 \\
        PTC\_MM & 336 & 2 & 13.97 & 14.32 & 20 \\ 
        PTC\_MR & 344 & 2 & 14.29 & 14.69 & 18 \\ 
        PTC\_FM & 349 & 2 & 14.11 & 14.48 & 18 \\ 
        PTC\_FR & 351 & 2 & 14.56 & 15.00 & 19 \\
        BZR & 405 & 2 & 35.75 & 38.36 & 10 \\ 
        COX2 & 467 & 2 & 41.22 & 43.45 & 8 \\ 
        DHFR & 467 & 2 & 42.43 & 44.54 & 9 \\ 
        ENZYMES & 600 & 6 & 32.63 & 62.14 & 3 \\ 
        PROTEINS & 1113 & 2 & 39.06 & 72.82 & 3 \\ 
        DD & 1178 & 2 & 284.32 & 715.66 & 82 \\ 
        NCI1 & 4110 & 2 & 29.87 & 32.30 & 37 \\ 
        IMDB-B& 1000& 2& 19.77& 96.53&-\\
        IMDB-M& 1500& 3& 13.00& 65.94&-\\
        COLLAB& 5000& 3& 74.49& 2457.78&-\\
        \bottomrule
    \end{tabular}
    }
    \caption{Statistics of the datasets we use. ``-" means the dataset has no node labels.}
    \label{table:datasets}
\end{table}

\label{sec:detailsOfDatasets}

\subsection{Details of NLM}
\label{sec:detailsOfNLM}

During the fine-tuning process of BERT, we treat the labels in slices as words, and the encoding of the whole slices as sentences. Subsequently, each label in the sequence is replaced with [MASK] with a certain probability \(p\). Using cross-entropy loss, we fine-tune the original pertained BERT model from \cite{BERT} (bert-base-uncased) to predict the label with bidirectional contexts. The embeddings of the last layer in BERT are extracted to obtain the embedding for each label. The dimensions of the embeddings for labels and slices are both set to 792, and the maximum token length for input is configured to be either 250 or 500, depending on the dataset size.

In training the Word2Vec (w2v) model \cite{word2vec}, the window size is set to 5, and the embedding size is set to 32.

\section{Further Experimental Evaluation Results}

\subsection{Parameter Sensitivity for Experiment Times}
\label{sec:Tsensitivity}
Figure \ref{fig:paramsensT} shows the average classification accuracy of DHGAK-BERT in 10-fold cross-validation with parameter \(T\in\{1,3,5,7,9\}\) and other parameters fixed on several datasets.
We can observe that DHGAK-BERT behaves stably over different values \(T\) and consistently achieves high performance.

\begin{figure}[!htbp]
    \centering
    \resizebox{0.9\linewidth}{!}{
    \includegraphics{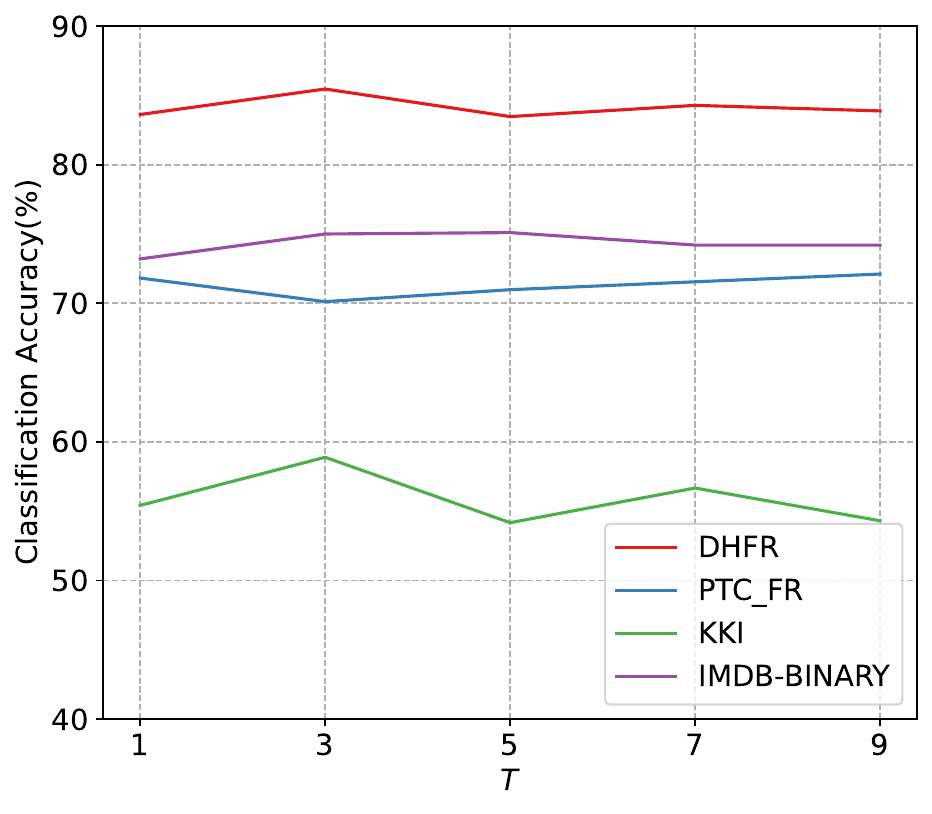}
    }
    \caption{Parameter sensitivity analysis on \(T\) in DHGAK-BERT.}
    \label{fig:paramsensT}
\end{figure}

\subsection{Additional Realizations of DHGAK}
\label{sec:additionReals}

Table \ref{tab:exReal} presents the average classification accuracy and variance of two additional realizations of DHGAK on various datasets: DHGAK-w2v (DBSCAN) utilizing DBSCAN \cite{dbscan} and DHGAK-w2v (K-means, DBSCAN) employing K-means and DBSCAN. Since the number of clusters is obtained automatically in DBSCAN, there is no need to grid search the parameter \(cluster\_factor\) for DBSCAN clustering method, and other parameters are grid-searched as the same as mentioned in Table \ref{tab:paramSearch}. The results indicate that both of the two additional realizations achieve competitive performances compared to GAWL \cite{GAWL}, highlighting the superior generalization of our proposed DHGAK.

\begin{table}[!htbp]
    \centering
    \resizebox{\linewidth}{!}{
    \begin{tabular}{l|cc|c}
        \toprule
         Datasets& \thead{DHGAK-w2v\\(DBSCAN)}&\thead{DHGAK-w2v\\(K-means, DBSCAN)}&GAWL\\
         \midrule
         KKI&  $55.4\pm5.2$&   $56.4\pm9.0$&$\mathbf{56.8\pm23.1}$\\
         MUTAG&  $88.9\pm4.9$&  $\mathbf{88.8\pm4.4}$&$87.3\pm 6.3^\dag$\\
         PTC\_MM&  $68.5\pm3.8$&  $\mathbf{69.3\pm7.6}$&$66.3\pm5.3 $\\
         PTC\_FR&  $68.7\pm3.0$&  $\mathbf{70.1\pm6.3}$&$64.7\pm3.6$\\
         DHFR&  $84.3\pm3.9$&  $83.5\pm3.7$ &$\mathbf{83.7\pm3.0}$\\
         ENZYMES&  $49.8\pm7.0$&   $\mathbf{58.5\pm7.5}$&$\mathbf{58.5\pm4.8}$\\
         IMDB-B&  $71.4\pm3.5$&   $\mathbf{76.0\pm2.0}$&$74.5\pm4.1^\dag$
\\
         IMDB-M& $48.3\pm2.7$&  $49.1\pm2.6$&$\mathbf{51.7\pm 5.2^\dag}$\\
         COLLAB& $78.3\pm1.2$&  $\mathbf{81.6\pm1.3}$&$81.5\pm2.0^\dag$\\
         \bottomrule
    \end{tabular}
    }
    \caption{Comparison of the average classification accuracy of DHGAK-w2v (DBSCAN), DHGAK-w2v (K-means, DBSCAN), and GAWL over the 10-fold cross-validation.}
    \label{tab:exReal}
\end{table}

\end{document}